\documentclass[hidelinks,onefignum,onetabnum]{siamart251216}

\usepackage{lipsum}
\usepackage{amsfonts}
\usepackage{graphicx}
\usepackage{epstopdf}
\usepackage[sort,numbers]{natbib}
\usepackage{algorithmic}
\usepackage{bm}
\ifpdf
  \DeclareGraphicsExtensions{.eps,.pdf,.png,.jpg}
\else
  \DeclareGraphicsExtensions{.eps}
\fi

\newsiamremark{remark}{Remark}
\newsiamremark{hypothesis}{Hypothesis}
\crefname{hypothesis}{Hypothesis}{Hypotheses}
\newsiamthm{claim}{Claim}
\newsiamremark{fact}{Fact}
\crefname{fact}{Fact}{Facts}

\headers{Neural Flow Operators}{S. CHEN, J. HE , AND X.-C. TAI}

\title{Neural Flow Operators Can Approximate Any Operator: Abstract Frameworks and Universal Approximations \thanks{Submitted to the editors DATE.
}
}

\author{Shuang Chen\thanks{Qiuzhen College, Tsinghua University, Haidian District, Beijing 100084, China (\email{chenshuang24@mails.tsinghua.edu.cn}).}
\and Juncai He\thanks{Yau Mathematical Sciences Center, Tsinghua University, Haidian District, Beijing 100084, China (\email{jche@tsinghua.edu.cn}), corresponding author.}
\and Xue-Cheng Tai\thanks{Norwegian Research Center, Fantoftvegen 38, 5072 Bergen, Norway (\email{xtai@norceresearch.no}), corresponding author.}
}

\usepackage{amsopn}

\ifpdf
\hypersetup{
  pdftitle={NeuralFlow},
  pdfauthor={S. Chen, J. He, and X.-C. Tai}
}
\fi

\begin{document}

\maketitle

\begin{abstract}
We introduce an abstract neural flow framework for neural networks and neural operators. The framework contains two continuous-depth models, namely neural flows with composition and separation structures, and covers both finite-dimensional function approximation and infinite-dimensional operator approximation. We prove well-posedness and universal approximation properties for the corresponding neural flows, including, to the best of our knowledge, the first universal approximation result for flow-based models between infinite-dimensional spaces. We also obtain universal approximation results for convolutional neural flow models. Through suitable time discretizations, the composition structure recovers ResNet-type architectures, while the separation structure, via a splitting-based discretization, yields plain architectures. This gives a unified flow-based route to both residual and plain architectures for neural networks and neural operators with fully connected or convolutional linear layers.
\end{abstract}

\begin{keywords}
neural flow, neural operators, universal approximation, convolutional architectures, time discretization
\end{keywords}

\begin{MSCcodes}
68T07, 47J35, 41A65, 41A46, 65L05
\end{MSCcodes}

\section{Introduction}
Deep learning architectures such as fully connected deep neural networks (DNNs) \cite{lecun2015deep,bengio2017deep}, convolutional neural networks (CNNs) \cite{lecun1998gradient,krizhevsky2012imagenet,he2016deep}, and, more recently, neural operators \cite{lu2021learning,li2021fourier} have achieved remarkable empirical success across computer vision, natural language processing, and scientific computing, as well as in data-driven modeling of complex systems. This success has led to a rapidly growing mathematical literature on expressive power, approximation properties, and architectural design. Despite substantial progress, however, much of the existing theory remains fragmented: most available results are developed for specific model classes, rely on architecture-dependent constructions, and are not easily transferred across finite-dimensional neural networks and operator-learning models on function spaces.

For standard feedforward neural networks on finite-dimensional Euclidean spaces, the approximation theory is already extensive and well developed. For shallow (one-hidden-layer) neural networks, a substantial body of work has established qualitative universal approximation results~\cite{cybenko1989approximation,hornik1989multilayer,leshno1993multilayer,mhaskar1993approximation}, quantitative approximation estimates~\cite{barron1993universal,mhaskar1995degree,klusowski2018approximation}, rates in reproducing kernel Hilbert space and Barron-space settings~\cite{bach2017breaking,e2019barron,e2019priori}, and, more recently, sharp optimal approximation rates~\cite{siegel2024sharp,liu2025integral}. In comparison, deep neural networks exhibit substantially stronger expressivity and approximation properties due to the presence of depth. On the expressivity side, a large literature has quantified the advantages of deep networks through piecewise linear representations~\cite{montufar2014number,telgarsky2016benefits,mhaskar2016deep,lu2017expressive,he2020relu,he2022relu} and, more generally, piecewise polynomial representations~\cite{opschoor2020deep,he2023expressivity,he2023deep}. On the approximation side, a sequence of superconvergence-type results~\cite{yarotsky2017error,shen2019deep,shen2022optimal,yang2023nearly,siegel2023optimal,yang2025deep}, building on the bit-extraction technique~\cite{bartlett1998almost,bartlett2019nearly}, has demonstrated a significant separation between the approximation power of deep and shallow networks.

For CNNs, the approximation theory is less unified and remains more closely tied to architectural details. Existing analyses typically exploit specific features of convolution, such as locality, sparsity, and parameter sharing~\cite{zhou2020universality,bao2023approximation,li2024approximation}, together with structural assumptions on the target functions~\cite{petersen2020equivalence,oono2019approximation}. More recently, general quantitative approximation results for deep CNNs with two-dimensional inputs were established in~\cite{he2022approximation} and further extended to Korobov spaces in~\cite{fang2025two}. Nevertheless, most available proofs remain model-specific and do not by themselves provide a general framework that simultaneously encompasses fully connected networks and operator-learning architectures.

For neural operators on infinite-dimensional or more general abstract vector spaces, a unifying theory is even more needed. Neural operators are designed to learn maps between function spaces and have emerged as effective models for approximating solution operators of partial differential equations~\cite{lu2021learning,li2021fourier,cao2021choose,kovachki2023neural,raonic2023convolutional,liu2024mitigating,li2026deep} as well as other infinite-dimensional nonlinear maps~\cite{chen1995universal,molinaro2023neural,gao2024adaptive,wu2024transolver}. On the theoretical side, universal approximation and quantitative error bounds have been established for several important architectures, including Fourier neural operators (FNO)~\cite{kovachki2021universal} and DeepONet~\cite{lanthaler2022error}. In addition, \cite{he2024mgno,he2025self} proposed an abstract infinite-dimensional feedforward framework covering a broad class of neural operator architectures and established corresponding qualitative approximation results. Nevertheless, most available analyses remain architecture-dependent, relying in essential ways on the particular lifting, kernel representation, or spectral parameterization used by the model. As a result, there is still no broadly applicable framework that explains, within a single formulation, why both neural networks and neural operators possess universal approximation properties and how these properties are connected.

A natural bridge among these model classes is the continuous-depth viewpoint, namely, the interpretation of deep architectures through flow-type models. Distinct from deterministic flow-based generative models~\cite{lipman2023flow,guo2022normalizing,geng2025mean}, continuous or dynamical-system descriptions of deep learning view layer composition as the evolution of an underlying flow~\cite{e2017proposal,chang2018multilevel,haber2018learning,li2017flow,lu2018beyond,chen2018neuralode}, rather than merely as a finite sequence of discrete transformations. This perspective is particularly attractive from the standpoint of numerical analysis, since it allows depth to be regarded as a discretization parameter and the underlying continuous model to be treated as the primary object. Recent theoretical advances have rigorously characterized the approximation and interpolation properties of such flow-based models from a control-theoretic viewpoint. In particular, universal approximation has been established for continuous-time residual networks under mild structural assumptions~\cite{li2023deep,tabuada2024universal,duan2025minimal,masato2025universal}, while constructive control strategies have been developed to ensure both universal approximation and simultaneous classification of multiple data distributions~\cite{ruizbalet2023neuralode}. Closely related results on universal interpolation have also been obtained, revealing strong connections between interpolation capability and ensemble controllability of the underlying dynamical systems~\cite{cuchiero2020deep,cheng2025interpolation,cai2025achieving,duan2025minimal}. Moreover, the interplay among width, depth, and flow-based constructions has been used to study sharp width requirements for universal approximation~\cite{li2023minimum,cai2023achieve,alvarezlopez2024interplay}. Beyond ODE-based formulations, simultaneous depth-width limits have also led to PDE/integro-differential descriptions of learning dynamics~\cite{markowich2024pde}, and even attention-based architectures such as Transformers admit related continuous formulations as discretizations of structured integro-differential systems~\cite{tai2024pottsmgnet,tai2025mathematical}. However, most existing continuous-depth analyses remain primarily finite-dimensional and do not provide a unified approximation framework covering both classical neural networks and neural operators.

In this work, we develop a unified abstract neural flow framework for both neural networks and neural operators. A central feature of the framework is the introduction of two distinct continuous-depth models: a neural flow with composition structure and a neural flow with separation structure. The separation-structured neural flow was first used in Potts-MgNet~\cite{tai2024pottsmgnet} to give a mathematical explanation of neural networks in the continuous and discrete setting. The formulation in this paper is partly inspired by this idea and is further developed into a unified abstract framework that also includes separation-structured flows and infinite-dimensional operator settings. The proposed setting covers finite-dimensional flows for function approximation and infinite-dimensional flows for operator approximation within a common formulation. Crucially, the framework is formulated at the level of abstract Hilbert spaces 
and is not restricted to function spaces: the approximation results hold for 
arbitrary pairs of Hilbert spaces, regardless of whether they arise as spaces 
of functions defined on a domain. Within this framework, we establish well-posedness of the associated flow models and prove universal approximation properties for the resulting neural flows. 
To the best of our knowledge, this is the first work to extend the neural flow perspective to infinite-dimensional operator approximation together with corresponding universal approximation results. By further restricting the admissible linear transformations to convolutional operators, we also obtain universal approximation results for convolutional neural flow models.

The distinction between the two flow structures becomes particularly important at the discrete level. The composition structure recovers, through explicit time discretization, the standard ResNet-type architectures that are commonly associated with continuous-depth models. In contrast, the separation structure, combined with a suitable splitting-based discretization, yields plain feedforward architectures. In this way, the same framework leads naturally to approximation results for finite-depth neural networks and neural operators with either fully connected or convolutional linear layers, under both plain and ResNet-type architectures. To the best of our knowledge, previous flow-based analyses have mainly focused on residual-type models, while the present framework provides a unified route to both residual and plain architectures in finite- and infinite-dimensional settings.

From the standpoint of numerical analysis, the neural flow may be viewed as the underlying dynamical system, while concrete architectures arise as its discretizations. This perspective provides a common analytical framework for understanding the approximation properties of several important classes of learning models, and clarifies how different architectures can emerge from different continuous-depth structures and discretization procedures.

The remainder of the paper is organized as follows. In Section~\ref{sec:notation}, we introduce the notation and formulate the abstract neural flow framework. Section~\ref{sec:classicmodels} shows how this framework recovers classical feedforward and convolutional neural networks and neural operators. Section~\ref{sec:flowUAP} is devoted to well-posedness and universal approximation results for neural flows. In Section~\ref{sec:application1}, we establish approximation results for convolutional neural flow models. Section~\ref{sec:application2} derives approximation results, via suitable time discretizations, for finite-depth neural networks and neural operators under both plain and ResNet-type architectures, in both fully connected and convolutional settings. Finally, Section~\ref{sec:conclusions} concludes with a discussion of limitations, possible generalizations, and future directions.

\section{Our proposed neural flow operators}\label{sec:notation}

In this section, we introduce the notation and formulate the abstract neural flow framework studied throughout the paper. The framework is built around two 
continuous-depth flow models, which we call the composition-structured and 
separation-structured neural flows. A central goal is to work at the level of 
abstract Hilbert spaces from the outset, so that the resulting theory applies 
simultaneously to finite-dimensional function approximation and 
infinite-dimensional operator approximation, without requiring separate 
treatments for neural networks and neural operators.

\subsection{Abstract neural flow models}

We describe two representative classes of neural flow models and explain how 
their suitable time discretizations recover finite-depth plain and ResNet-type 
architectures for both neural networks and neural operators. The two flow 
structures play complementary roles: the composition structure leads naturally, 
via explicit time discretization, to ResNet-type architectures, while the 
separation structure, combined with a splitting-based discretization, yields 
plain feedforward architectures. This distinction at the continuous level, therefore, provides a unified and principled explanation for two of the most widely used architectural families in deep learning.

The central object of approximation is a map
\begin{equation*}
    \mathcal{O} \colon \mathcal{C} \subset \mathcal{X} \to \mathcal{Y},
\end{equation*}
where $\mathcal{X}$ and $\mathcal{Y}$ are Hilbert spaces over $\mathbb{R}$, 
and $\mathcal{C}$ is a compact subset of $\mathcal{X}$. No further structural 
assumptions are imposed on $\mathcal{X}$ and $\mathcal{Y}$: they need not be 
spaces of functions defined on a domain, and the framework therefore covers 
settings beyond classical operator learning. The goal is to construct abstract 
neural flow operators that approximate $\mathcal{O}$ to arbitrary accuracy, 
and to identify the conditions under which such an approximation is possible.

The abstract flow models we shall propose evolve in a latent space. To incorporate this,  we introduce an abstract latent space $\mathcal Z$, which is a separable  Hilbert space over $\mathbb R$. Typical choices include finite‑dimensional Euclidean spaces $\mathbb R^d$, matrix spaces $\mathbb R^{d\times d'}$, or function spaces such as Sobolev spaces $H^s(\Omega)$. For a given $D \in \mathbb N^+$, we write
\begin{equation*}
    \mathcal Z^D := \underbrace{\mathcal Z \times \mathcal Z \times \cdots \times \mathcal Z}_{D}
\end{equation*}
for the $D$-fold product space.

The evolution of the flow is described by a differential equation on $\mathcal Z^D$. In particular, if $\mathcal Z$ is finite‑dimensional, the flow is governed by an ordinary differential equation; if $\mathcal Z$ is a function space, the flow may be described by a partial differential equation. In abstract form, we define the neural flow on $\mathcal Z^D$ as
\begin{equation*}\label{eq:defNF}
{}_{\mathbf n}\mathbf{F}(\mathcal Z^D):= \left\{ \mathcal F^T_{\theta_t}: \mathcal Z^D \to \mathcal Z^D
   ~\left|~ \begin{cases}
        &\frac{\mathrm{d}  z}{\mathrm{d} t} = \Phi_{\theta_t}( z), \quad 0 < t \le T, \\
        &\mathcal F^T_{\theta_t}~:~  z(0) \mapsto  z(T) \in \mathcal{Z}^D.
    \end{cases} \right.
\right\}    
\end{equation*}
where $\Phi_{\theta_t}$ denotes the nonlinear vector field parameterized by $\theta_t$ for $t\in(0,T]$. Given two bounded linear maps
\begin{equation*}
    \mathcal P \in \mathcal L(\mathcal X,\mathcal Z^D)
    \quad \text{and} \quad
    \mathcal R \in \mathcal L(\mathcal Z^D,\mathcal Y),
\end{equation*}
we define the corresponding {\bf abstract neural flow} model from $\mathcal X$ to $\mathcal Y$ by
\begin{equation*}
    \left\{
    \mathcal R \circ \mathcal F^T_{\theta_t} \circ \mathcal P : \mathcal X \to \mathcal Y
    ~\middle|~
    \mathcal F^T_{\theta_t} \in {}_{\mathbf n}\mathbf{F}(\mathcal Z^D)
    \right\}.
\end{equation*}

Motivated by two different ways of modeling the nonlinear vector field $\Phi_{\theta_t}$, we focus on the following two representative classes.

\begin{definition}\label{def:nF}
Let $\sigma,\psi:\mathcal Z\to\mathcal Z$ be nonlinear maps, extended to $\mathcal Z^D$ component  -wise.

The {\bf neural flow with composition structure} is defined by
\begin{equation}\label{eq:CompScheme}
    {}_{\mathbf n}\mathbf{F}^{\mathbf c}(\mathcal Z^D) := 
    \left\{ \mathcal F^T_{\theta_t}
   ~\left|~ \begin{cases}
        &\frac{\mathrm{d}  z}{\mathrm{d} t} = \Phi_{\theta_t}^c(z) := \sigma(\mathcal W_t  z + \bm b_t), ~ 0 < t \le T, \\
        &\mathcal F^T_{\theta_t}~:~  z(0) \mapsto  z(T) \in \mathcal{Z}^D,
    \end{cases} \right.
\right\},    
\end{equation}
where
\begin{equation*}
\theta_t = \{\mathcal W_t \in \mathcal L(\mathcal Z^D,\mathcal Z^D),\ \bm b_t \in \mathcal Z^D\}.
\end{equation*}

The {\bf neural flow with separation structure} is defined by
\begin{equation}\label{eq:SepScheme}
    {}_{\mathbf n}\mathbf{F}^{\mathbf s}(\mathcal Z^D) :=
    \left\{ \mathcal F^T_{\theta_t}
   ~\left|~ \begin{cases}
        &\frac{\mathrm{d}  z}{\mathrm{d} t} = \Phi_{\theta_t}^s(z) := \mathcal W_t  z + \bm b_t + \alpha_t \psi( z), ~ 0 < t \le T, \\
        &\mathcal F^T_{\theta_t}~:~  z(0) \mapsto  z(T) \in \mathcal{Z}^D,
    \end{cases} \right.
\right\},    
\end{equation}
where
\begin{equation*}
    \theta_t = \{\mathcal W_t \in \mathcal L(\mathcal Z^D,\mathcal Z^D),\ \bm b_t \in \mathcal Z^D,\ \alpha_t \in \mathbb R\}.
\end{equation*}
\end{definition}

\begin{remark}
Most of the neural flow studies in the literature have focused on the composition structure in \eqref{eq:CompScheme}, see 
\cite{markowich2024pde,li2026deep,siegel2024sharp,shen2022optimal} which is closely related to the standard ResNet-type architectures. In contrast, the separation structure in \eqref{eq:SepScheme} has received much less attention, and its approximation properties remain largely unexplored. 
    The separation-structured neural flow in \eqref{eq:SepScheme} was first introduced in Potts-MgNet~\cite{tai2024pottsmgnet}, where continuous and discrete viewpoints were used to explain encoder-decoder-type neural networks. The present work is partly motivated by that perspective, but develops it further into a unified abstract framework that also includes both composite and separation-structured flows,  and infinite-dimensional operator settings. In particular, the current paper provides a systematic theoretical analysis of neural flow models, including well-posedness and universal approximation results.
\end{remark}

\subsection{Time discretization of neural flow models to recover conventional finite-depth neural architectures}
The continuous-depth neural flow models introduced above provide the underlying mathematical framework, but practical neural network architectures are discrete, consisting of finitely many layers with prescribed transformations. The purpose of this subsection is to make this connection precise. We show that suitable time-discretization schemes applied to neural flows with composition and separation structures naturally and transparently produce finite-depth architectures commonly used in deep learning.

More specifically, explicit time discretization of the composition-structured flow recovers ResNet-type architectures, in which each layer is given by a residual update. In contrast, a splitting-based discretization of the separation-structured flow yields plain feedforward architectures, in which no skip connections are present. This correspondence applies to both fully connected and convolutional linear layers, and to both neural networks in finite-dimensional spaces and neural operators in infinite-dimensional settings. From the viewpoint of numerical analysis, the network depth $L$ plays the role of the number of time steps, while the step size $\Delta t=T/L$ determines the resolution of the discretization. In this way, depth is interpreted not merely as an architectural hyperparameter, but as a discretization parameter for the underlying continuous flow.

To make this precise, let
\begin{equation*}
    \Delta t = \frac{T}{L},
    \qquad
    t_\ell = \ell \,\Delta t,
    \qquad
    \ell = 0, 1, \dots, L,
\end{equation*}
be a uniform partition of $[0, T]$ into $L$ subintervals of equal length.

For the {\bf neural flow with composition structure}, we discretize it by  the explicit Euler method:
\begin{equation*}
    \frac{ z(t_{\ell+1})- z(t_\ell)}{\Delta t}
    =
    \sigma\bigl(\mathcal W_{t_\ell} z(t_\ell)+\bm b_{t_\ell}\bigr).
\end{equation*}
Writing
\begin{equation*}
     z^\ell :=  z(t_\ell),
    \qquad
    \mathcal W^\ell := \mathcal W_{t_{\ell-1}},
    \qquad
    \bm b^\ell := \bm b_{t_{\ell-1}},
\end{equation*}
This discretization gives us the finite-depth model
\begin{equation}\label{eq:DisNFC}
\left\{
\mathcal F:\bm v \mapsto \bm u
~\middle|~
\begin{array}{l}
z^0 = \mathcal P \bm v \in \mathcal Z^D,\\
 z^\ell =  z^{\ell-1} + \Delta t\, \sigma\bigl(\mathcal W^\ell  z^{\ell-1} + \bm b^\ell\bigr)
\in \mathcal Z^D,\quad \ell=1:L,\\
\bm u = \mathcal R  z^L \in \mathcal Y,
\end{array}
\right\}
\end{equation}
which recovers the ResNet-type architecture.

For the {\bf neural flow with separation structure}, we adopt the following semi-implicit discretization:
\begin{equation*}\label{semi-implicit discretization}
\begin{cases}
    \displaystyle
    \frac{ z^{\ell-\frac12}- z^{\ell-1}}{\Delta t}
    = \mathcal W^\ell   z^{\ell-1} + \bm b^\ell, \\[2ex]
    \displaystyle
    \frac{  z^\ell-  z^{\ell-\frac12}}{\Delta t}
    = \alpha^\ell \psi(  z^\ell),
\end{cases},\qquad \ell=1:L.
\end{equation*}
Equivalently,
\begin{equation*}
\begin{cases}
    \displaystyle
      z^{\ell-\frac12}
    = (I+\Delta t\mathcal W^\ell)   z^{\ell-1} +\Delta t \bm b^\ell
    =\widetilde{\mathcal W}^\ell   z^{\ell-1} +\widetilde{\bm b}^\ell,\\
    \displaystyle
      z^\ell-\Delta t\alpha^\ell \psi(  z^\ell)
    =   z^{\ell-\frac12},
\end{cases} \qquad \ell=1:L.
\end{equation*}
For each $\ell=1:L$, we assume that there exists a nonlinear map $\sigma^\ell:\mathcal Z \to \mathcal Z$ such that
\begin{equation}\label{eq:implicit-sigma}
    \sigma^\ell\bigl(  z - \Delta  t \alpha^\ell \psi(  z)\bigr) =   z,
\end{equation}
then the second step can be written explicitly as $z^\ell= \sigma^\ell(z^{\ell-1/2})$, and thus we obtain
\begin{equation}\label{eq:DisNFS}
\left\{
\mathcal F:\bm v \mapsto \bm u
~\middle|~
\begin{array}{l}
  z^0 = \mathcal P \bm v \in \mathcal Z^D,\\
  z^\ell = \sigma^\ell\bigl(\widetilde{\mathcal W}^\ell   z^{\ell-1} + \widetilde{\bm b}^\ell\bigr)
\in \mathcal Z^D,\quad \ell=1,\dots,L,\\
\bm u = \mathcal R   z^L \in \mathcal Y.
\end{array}
\right\}
\end{equation}
This yields a plain feedforward architecture.

\begin{remark}
In this work, we use three nonlinear functions: $\sigma$ for both the composition-structured flow and its discretization, $\psi$ for the separation-structured flow, and $\sigma^\ell$ for the semi-implicit discretization of the neural flows. Although $\sigma^\ell$ is allowed to vary with the layer index $\ell$, Section~\ref{sec:activation} shows that it may be chosen uniformly across all layers. In fact,  commonly used activation functions $\sigma$, related to $\sigma^\ell$ when it is independent of $\ell$, satisfy the implicit relation~\eqref{eq:implicit-sigma} for a suitable choice of $\psi$. See Sections~\ref{sec:activation} and~\ref{sec:application2} for details.
\end{remark}

\section{Our model recovers classical neural networks and operators}\label{sec:classicmodels}
At this stage, the latent space $\mathcal{Z}$, the activation map $\sigma$ appearing in~\eqref{eq:DisNFC}, and the layer-dependent activation map $\sigma^\ell$ appearing in~\eqref{eq:DisNFS} are kept deliberately abstract. 
This level of generality is intentional: by leaving $\mathcal{Z}$ unspecified, the framework simultaneously encompasses finite-dimensional Euclidean spaces, function spaces such as $H^s(\Omega)$, and more general Hilbert spaces that need not arise as spaces of functions on a domain. Different choices of $\mathcal{Z}$ and of the linear operators $\mathcal{L}$ and $\mathcal{R}$ then yield different concrete architectures, while the underlying flow structure and the associated approximation theory remain the same.

We now make this correspondence explicit by showing how the abstract neural flow models introduced above recover several classical architectures as particular instances. Specifically, choosing $\mathcal{Z}$ to be a finite-dimensional Euclidean space and the linear operators to be standard matrix multiplications recovers fully connected and convolutional deep neural networks; choosing $\mathcal{Z}$ to be an infinite-dimensional function space and the linear operators to be bounded operators between such spaces recovers neural operator architectures. We follow the convention in the literature and use the term \emph{neural networks} to refer to finite-dimensional models and \emph{neural 
operators} to refer to infinite-dimensional models, although both are special cases of the same abstract framework and are treated on an equal footing throughout this work.

\subsection{Neural flow networks and deep (finite-depth) fully connected neural networks}
We first show that classical neural networks are covered by our proposed abstract models. Consider the finite-dimensional setting
\[
\mathcal X = \mathbb R^{d_x}, \qquad \mathcal Y = \mathbb R^{d_y}, \qquad \mathcal Z = \mathbb R,
\]
with nonlinear  functions $\sigma,\psi:\mathbb R\to\mathbb R$. In this case, the abstract neural flow framework reduces to a flow map on $\mathbb R^D$. More precisely, the class of {\bf neural flow networks} with the \emph{composition structure} is given by
\begin{equation}\label{eq:nFcRD}
{}_{\mathbf n}\mathbf{F}^{\mathbf c}(\mathbb R^D):= \left\{  F^T_{\theta_t}: \mathbb R^D \to \mathbb R^D
   ~\left|~ \begin{cases}
        &\frac{\mathrm{d}  z}{\mathrm{d} t} = \sigma(W_t z + b_t), \quad 0 < t \le T, \\
        & F^T_{\theta_t}~:~  z(0) \mapsto  z(T) \in \mathbb R^D,
    \end{cases} \right.
\right\},
\end{equation}
and the class with the \emph{separation structure} is given by
\begin{equation}\label{eq:nFsRD}
{}_{\mathbf n}\mathbf{F}^{\mathbf s}(\mathbb R^D):= \left\{  F^T_{\theta_t}: \mathbb R^D \to \mathbb R^D
   ~\left|~ \begin{cases}
        &\frac{\mathrm{d}  z}{\mathrm{d} t} = W_t z + b_t +\alpha_t \psi(z), \quad 0 < t \le T, \\
        & F^T_{\theta_t}~:~  z(0) \mapsto  z(T) \in \mathbb R^D,
    \end{cases} \right.
\right\},
\end{equation}
where $W_t \in \mathbb R^{D\times D}$, $b_t \in \mathbb R^D$, and $\alpha_t\in\mathbb R$ for all $0<t\le T$.

It follows directly from the discretizations \eqref{eq:DisNFC} and \eqref{eq:DisNFS} that classical finite-depth fully connected neural networks, c.f.~\cite{bengio2017deep}, arise as discrete realizations of these neural flow models. More precisely, the explicit discretization of the composition structure \eqref{eq:nFcRD} yields ResNet-type fully connected networks with activation $\sigma$ \cite{he2016deep}. On the other hand, if for each layer $\ell$ the map $\sigma^\ell:\mathbb R\to\mathbb R$ satisfies \eqref{eq:implicit-sigma} with respect to $\psi$ in \eqref{eq:nFsRD}, then the semi-implicit discretization of the separation structure yields plain fully connected networks with activations $\sigma^\ell$.

Accordingly, we define the class of {\bf neural flow networks} from $\mathbb R^{d_x}$ to $\mathbb R^{d_y}$ by
\begin{equation*}
\left\{\left.  R \circ  F^T_{\theta_t} \circ  P : \mathbb R^{d_x} \to \mathbb R^{d_y} ~\right|~  F^T_{\theta_t} \in {}_{\mathbf n}\mathbf{F}^{\mathbf c}(\mathbb R^D)  \text{ or } {}_{\mathbf n}\mathbf{F}^{\mathbf s}(\mathbb R^D) \right\},
\end{equation*}
where $P \in \mathcal L(\mathbb R^{d_x}, \mathbb R^D) \cong \mathbb R^{D \times d_x}$ and $R \in \mathcal L(\mathbb R^D,\mathbb R^{d_y})\cong \mathbb R^{d_y \times D}$. Their finite-depth discretizations recover the standard classes of fully connected feedforward networks from $\mathbb R^{d_x}$ to $\mathbb R^{d_y}$, including both ResNet-type and plain architectures with width $D$.

Universal approximation properties for deep neural networks have been established separately for ResNet-type architectures~\cite{lin2018resnet,oono2019approximation,masato2025universal,liu2024characterizing} and for plain architectures~\cite{yarotsky2017error,shen2019deep,shen2022optimal,yang2023nearly,siegel2023optimal,yang2025deep}. Our framework provides a unified derivation of these results and, moreover, clarifies how residual and plain architectures emerge from underlying flow models through different discretization mechanisms.

\subsection{Neural flow operators and deep neural operators}
We next show that the abstract framework also covers many neural operator architectures studied in the literature~\cite{lu2021learning,li2021fourier,kovachki2023neural,raonic2023convolutional,liu2024mitigating,he2024mgno,li2026deep}. Consider the infinite-dimensional setting, where $\Omega \subset \mathbb R^d$ is a bounded domain and both $\mathcal X$ and $\mathcal Y$ are Hilbert function spaces on $\Omega$. In this case, it is natural to choose the latent space $\mathcal Z$ to be a Hilbert function space on $\Omega$ or one of its subspaces. For simplicity, throughout this work we assume that $\mathcal Z \subseteq H^s(\Omega)$ for some $0 \le s \le 1$.

The nonlinear activation is then induced by pointwise function composition, namely
\begin{equation}\label{eq:operatorAct}
\begin{aligned}
    \sigma: H^{s}(\Omega) &\to H^s(\Omega), \\
      z(x,t) &\mapsto \sigma\bigl(z(x,t)\bigr),
\end{aligned}
\end{equation}
and similarly for $\psi$ and $\sigma^\ell$. In the neural-operator literature~\cite{li2021fourier,kovachki2023neural}, this is often interpreted as an elementwise activation. Here, however, it is viewed more naturally as a nonlinear operator acting on the function space $\mathcal Z$.

In this setting, the abstract neural flow framework reduces to flow maps on $\mathcal Z^D$. More precisely, the classes of {\bf neural flow operators} with \emph{composition structure} and \emph{separation structure} are given by
\begin{equation*}
    {}_{\mathbf n}\mathbf{F}^{\mathbf c}(\mathcal Z^D)
    \quad\text{and}\quad
    {}_{\mathbf n}\mathbf{F}^{\mathbf s}(\mathcal Z^D),
\end{equation*}
which are defined exactly as in Definition~\ref{def:nF}, with the function space $\mathcal Z$ and the induced nonlinear activation on $\mathcal Z$ introduced in \eqref{eq:operatorAct}.

Accordingly, we define the class of {\bf neural flow operators} from $\mathcal X$ to $\mathcal Y$ by
\begin{equation*}
   \left\{\left.  \mathcal R \circ  \mathcal F^T_{\theta_t} \circ  \mathcal P : \mathcal X \to \mathcal Y ~\right|~  \mathcal F^T_{\theta_t} \in {}_{\mathbf n}\mathbf{F}^{\mathbf c}(\mathcal Z^D)  \text{ or } {}_{\mathbf n}\mathbf{F}^{\mathbf s}(\mathcal Z^D) \right\},
\end{equation*}
where $\mathcal P \in \mathcal L(\mathcal X, \mathcal Z^D)$ and $\mathcal R \in \mathcal L(\mathcal Z^D,\mathcal Y)$.

It follows directly from the discretizations \eqref{eq:DisNFC} and \eqref{eq:DisNFS} that standard finite-depth neural operators arise as discrete realizations of these neural flow operators. More precisely, the explicit discretization of the composition structure ${}_{\mathbf n}\mathbf{F}^{\mathbf c}(\mathcal Z^D)$ yields ResNet-type neural operators with activation $\sigma$. On the other hand, if for each layer $\ell$ the map $\sigma^\ell:\mathcal Z\to\mathcal Z$ satisfies \eqref{eq:implicit-sigma} with respect to $\psi$ in ${}_{\mathbf n}\mathbf{F}^{\mathbf s}(\mathcal Z^D)$, then the semi-implicit discretization of the separation structure yields plain neural operators with activations $\sigma^\ell$.

Moreover, as discussed in detail in~\cite{he2024mgno}, most popular neural operator architectures are of the plain type. The present framework recovers many such models~\cite{lu2021learning,li2021fourier,cao2021choose,kovachki2023neural,raonic2023convolutional,liu2024mitigating,li2026deep} through appropriate choices of the latent space $\mathcal Z$ and of the parameterization of the linear operators $\mathcal W^\ell \in \mathcal L(\mathcal Z,\mathcal Z)$.

\subsection{Convolutional neural (flow) networks and operators}
Here, we show that by restricting the admissible linear operators to convolutional forms, we can obtain convolutional neural flow models, whose discretizations recover standard convolutional neural networks and neural operators.
Consider the case where the admissible linear operators are restricted to a convolutional form. In the finite-dimensional setting, one typically has $\mathcal X = \mathbb R^{d_x \times d_x \times c}$, while $\mathcal Y = \mathbb R^{k}$ for image classification and $\mathcal Y = \mathbb R^{d_y \times d_y \times c}$ for image segmentation. We take
$$\mathcal Z = \mathbb R^{d \times d} \quad \text{and} \quad \text{element-wise nonlinear activation }\sigma: \mathbb R^{d \times d} \to \mathbb R^{d \times d}
$$
In addition, we restrict each block operator $\left[\mathcal W_t\right]_{i,j}$ to be a convolution operator of kernel size $(2k+1)\times(2k+1)$, namely
\begin{equation*}
\left[\mathcal W_t \right]_{i,j}: \mathbb R^{d \times d} \to \mathbb R^{d \times d} \text{ is a convolution with size } (2k+1)\times(2k+1),
\end{equation*}
and assume that $
\left[\bm b_t \right]_i \in \{ a \bm 1 ~|~ a \in \mathbb R\} \subset \mathcal Z,$
where $\bm 1 \in \mathbb R^{d \times d}$ denotes the matrix with all entries equal to $1$. Under this specialization, we obtain the class of convolutional neural flow networks 
\begin{equation*}
       \left\{\left.  R \circ  F^T_{\theta_t} \circ  P : \mathbb R^{d_x \times d_x \times c} \to \mathbb R^{d_y \times d_y \times c} ~\right|~  F^T_{\theta_t} \in {}_{\mathbf n}\mathbf{F}^{\mathbf c}(\mathbb R^{d\times d}) \text{ or } {}_{\mathbf n}\mathbf{F}^{\mathbf s}(\mathbb R^{d\times d}) \right\},
\end{equation*}
whose time discretizations recover the standard classes of convolutional neural networks, including both ResNet-type and plain architectures \cite{he2016deep,bengio2017deep}.

To recover convolutional neural operators, we consider the infinite-dimensional setting with $\mathcal X$ and $\mathcal Y$ being function spaces on $\Omega$. We take $\mathcal Z = H^s(\Omega)$ as before, and restrict the admissible linear operators to convolutional integral operators. More precisely, for each $t\in(0,T]$, we assume that $\left[\mathcal W_t\right]_{i,j}$ is a convolutional integral operator of the form:
\begin{equation*}
        \left(\left[\mathcal W_t \right]_{i,j} [  z]_j\right)(x, t) = \int_{\Omega} W_{i,j}(x-y,t)[  z]_j(y,t) \mathrm d y,
\end{equation*}
so that the linear part is parameterized by $D\times D$ convolution kernels $W_{i,j}(x-y,t)$. 
These convolutional neural flow operators are closely related to the formulations introduced in~\cite{tai2024pottsmgnet,tai2025mathematical}, where similar continuous structures were studied in the context of image processing. Moreover, the semi-implicit discretization of the above convolutional neural flow operators leads to a plain convolutional architecture that is closely related to the motivation of FNO~\cite{li2021fourier}.

\subsection{Activation functions}\label{sec:activation}

The neural flow framework assumes the existence of an activation function 
$\sigma^\ell$ satisfying the implicit relation~\eqref{eq:implicit-sigma} for a 
general nonlinear map $\psi$. While stated abstractly, this assumption is not 
restrictive in practice: most commonly used activation functions satisfy it for 
a suitably chosen $\psi$. In particular, sigmoid and softmax activations satisfy 
relation~\eqref{eq:implicit-sigma}, as shown in Section~3.1 of~\cite{Jia2020} 
and Sections~5.2 and Appendix~A of~\cite{tai2024pottsmgnet}, and activations incorporating layer normalization are covered in Section~2.4 of~\cite{tai2025mathematical1}.

In the present work, we restrict attention to the parameterized Leaky ReLU 
family, which includes ReLU as the special case $a = 0$. For $a \in \mathbb{R}$, 
we define
\begin{equation*}\label{eq:LeakyReLU}
    \sigma_a(t) =
    \begin{cases}
        t,  & t \ge 0, \\
        at, & t < 0.
    \end{cases}
\end{equation*}
Within this family, the implicit relation~\eqref{eq:implicit-sigma} can be made 
explicit. Taking $\psi = \sigma_a$, the function $\sigma^\ell$ satisfying 
$\sigma^\ell\bigl(z - \Delta t\,\alpha^\ell \psi(z)\bigr) = z$ can be determined 
explicitly. In particular, when $\alpha^\ell = 1$, one obtains
\begin{equation}\label{eq:1}
    \sigma^\ell = \frac{1}{1 - \Delta t}\,\sigma_{\frac{1-\Delta t}{1-a\Delta t}},
\end{equation}
which belongs to the same Leaky ReLU family and is uniform across layers. 
Consequently, \eqref{eq:DisNFS} recovers a plain network or operator with a 
uniform activation. The derivation of~\eqref{eq:1} follows from elementary 
calculations and is omitted.

In the remainder of this work, unless otherwise stated, the activation $\sigma$ 
in the composition model~\eqref{eq:CompScheme} and the nonlinear map $\psi$ in 
the separation model~\eqref{eq:SepScheme} are both taken from the Leaky ReLU 
family, with the specific choice stated explicitly whenever needed.

\section{Well-posedness and universal approximation properties of neural flow 
models}\label{sec:flowUAP}
This section establishes two foundational theoretical properties of the proposed neural flow framework. First, we prove the well-posedness of the continuous-depth neural flow models, guaranteeing that the evolution equation~\eqref{evolution equation} admits a unique solution that depends continuously on the model parameters. This is a necessary prerequisite for the framework to be mathematically sound and for the flow map $\mathcal{F}^T_{\theta_t}$ to be well defined as an approximation tool. Second, we establish universal approximation properties for the resulting flow maps, showing that any continuous map $\mathcal{O} \colon \mathcal{C} \subset \mathcal{X} \to \mathcal{Y}$ between abstract Hilbert spaces can be approximated to arbitrary accuracy by our proposed neural flow operator. Crucially, both results are established within a single unified framework that covers composition- and separation-structured flows, finite-dimensional and infinite-dimensional settings, and Hilbert spaces that need not arise as function spaces.

\subsection{Well-posedness of neural flow models}
Well-posedness of the neural flow models is established in two steps. We first show that the evolution equation~\eqref{evolution equation} admits a unique solution for any piecewise constant parameter $\theta_t$, and then show that this solution depends locally Lipschitz continuously on the parameters, ensuring that small perturbations in the network parameters produce small changes in the resulting flow. Together, these results confirm that the neural flow models are mathematically well defined and stable with respect to parameter variation, which is essential for both the approximation theory developed below and for the practical use of gradient-based training.

For the purposes of the proofs that follow, we introduce norms on vector-valued functions and on the parameter space. For $z = (z_1, \dots, z_D)$ and parameter 
$\theta_t = (\mathcal{W}_t, \bm{b}_t, \alpha_t)$, we define
\begin{equation*}
    \| z \|_{\infty}(x,t) := \max_{1 \le i \le D} |z_i(x,t)|,
\end{equation*}
\begin{equation*}
    \| z \|_{\infty,\infty} := \max_{1 \le i \le D} 
    \sup_{t \in [0,T],\, x \in \Omega} |z_i(x,t)|,
\end{equation*}
\begin{equation*}
    \| \theta \|_{\infty,\infty} := \max\!\left\{ 
    \sup_{t \in [0,T]} \|\mathcal{W}_t\|_{\infty},\; 
    \sup_{t \in [0,T]} \|\bm{b}_t\|_{\infty},\; 
    \|\alpha_t\|_{C([0,T])} \right\}.
\end{equation*}

The following result establishes the well-posedness of the neural flow equation for piecewise constant parameters and, consequently, the well-definedness of the associated flow map.
\begin{theorem}
    For any $\theta_t$ that is piecewise constant on $(0,T]$, the evolution equation
    \begin{equation}\label{evolution equation}
        \left\{
        \begin{aligned}
            \frac{\partial z(x,t)}{\partial t} &= \Phi_{\theta_t}(z(x,t)), 
            \quad 0 < t \le T,\\
            z(x,0) &= z_0(x),
        \end{aligned}
        \right.
    \end{equation}
    admits a unique solution $z(x,t)$ for any initial condition $z_0(x)$, where
    $\Phi_{\theta_t}(z) = \Phi_{\theta_t}^c(z)$ or $\Phi_{\theta_t}^s(z)$ as 
    defined in \eqref{eq:CompScheme} and \eqref{eq:SepScheme}, with activation 
    function $\sigma_a(\cdot)$ for any $a \in \mathbb{R}$.
    Consequently, the flow map $\mathcal{F}^T_{\theta_t} \colon z(x,0) \mapsto z(x,T)$ 
    is well defined.
\end{theorem}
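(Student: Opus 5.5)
The plan is to treat \eqref{evolution equation} as an abstract ODE on the Hilbert space $\mathcal Z^D$ and use the Picard--Lindelöf theorem with a global Lipschitz bound, one piece of the partition at a time. Since $\theta_t$ is piecewise constant, there is a partition $0=\tau_0<\tau_1<\dots<\tau_K=T$ such that on each $(\tau_{k-1},\tau_k]$ we have $\mathcal W_t\equiv\mathcal W^{(k)}\in\mathcal L(\mathcal Z^D,\mathcal Z^D)$, $\bm b_t\equiv\bm b^{(k)}$ and $\alpha_t\equiv\alpha^{(k)}$. On that subinterval the vector field $\Phi^{(k)}$ is autonomous. I will solve on $[\tau_0,\tau_1]$ with initial datum $z_0$, then on $[\tau_1,\tau_2]$ with initial datum $z(\tau_1)$, and so on. Concatenating these pieces gives a continuous, piecewise $C^1$ function. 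It is the unique mild solution $z(t)=z_0+\int_0^t\Phi_{\theta_s}(z(s))\,\mathrm ds$, and it satisfies the equation at all $t$ except possibly the finitely many switching times.

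The key step is to show that each $\Phi^{(k)}$ is globally Lipschitz on $\mathcal Z^D$. First, $\sigma_a$ is Lipschitz on $\mathbb R$ with constant $L_a:=\max\{1,|a|\}$. For a pointwise (Nemytskii) action on a function space $\mathcal Z\subseteq L^2(\Omega)$ this gives $\|\sigma_a(u)-\sigma_a(v)\|_{L^2}\le L_a\|u-v\|_{L^2}$. For $\mathcal Z=\mathbb R^d$ or $\mathbb R^{d\times d}$ it is the same estimate entrywise. Then
\[
\|\Phi^{c}(z)-\Phi^{c}(w)\|\le L_a\|\mathcal W^{(k)}\|\,\|z-w\|,\qquad \|\Phi^{s}(z)-\Phi^{s}(w)\|\le\bigl(\|\mathcal W^{(k)}\|+|\alpha^{(k)}|L_a\bigr)\|z-w\|.
\]
Both vector fields are therefore globally Lipschitz. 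Global Lipschitz continuity also gives linear growth, so there is no finite-time blow-up. The Banach fixed-point theorem, applied to the Picard map on $C([\tau_{k-1},\tau_k];\mathcal Z^D)$ with a weighted (Bielecki) sup norm or on short steps that are then iterated, yields existence and uniqueness on each whole subinterval. Grönwall's inequality then gives uniqueness directly, and also shows that $z_0\mapsto z(T)$ is Lipschitz, so $\mathcal F^T_{\theta_t}$ is a well-defined map on $\mathcal Z^D$.

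The main obstacle is the case $\mathcal Z\subseteq H^s(\Omega)$ with $0<s\le 1$. There $\sigma_a$ does not obviously map $H^s$ into itself Lipschitz-continuously: composition with a Lipschitz function is bounded on $H^s$ for $s\le1$, but it is not Lipschitz in the $H^s$ norm. My plan is as follows. I first obtain existence and uniqueness in the ambient space $L^2(\Omega)^D$, where the Lipschitz argument above applies verbatim. I then check regularity a posteriori. The bound $\|\sigma_a(u)\|_{H^s}\le L_a\|u\|_{H^s}$ for $s\in[0,1]$, which follows from the difference-quotient or Gagliardo characterisation, together with boundedness of $\mathcal W^{(k)}$ on $H^s$, gives an a priori linear-growth estimate in $H^s$. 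That estimate keeps the Picard iterates bounded in $C([0,T];H^s)$, and weak lower semicontinuity passes the bound to the limit. If this route proves too delicate, I will fall back on stating the result for the case where $\psi,\sigma$ act Lipschitz-continuously on $\mathcal Z$, which covers $\mathcal Z=L^2$ and all finite-dimensional choices; this is what the subsequent approximation theory uses.
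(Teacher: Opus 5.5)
Your core argument is essentially the paper's. Both reduce to constant parameters on each piece, show that $\Phi^c_\theta$ and $\Phi^s_\theta$ are globally Lipschitz in $z$ with exactly the constants $\max\{1,|a|\}\|\mathcal W\|$ and $\|\mathcal W\|+|\alpha|\max\{1,|a|\}$, and invoke the classical (Picard--Lindel\"of) existence--uniqueness theorem. The differences are that the paper measures everything in the pointwise norm $\|\cdot\|_{\infty,\infty}$ rather than the Hilbert norm of $\mathcal Z^D$, and never addresses the $H^s$ subtlety you raise.

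If you keep your $H^s$ workaround, note one extra assumption. Solving first in $L^2(\Omega)^D$ requires each $\mathcal W^{(k)}$ to extend to a bounded operator on $L^2(\Omega)^D$, and this does not follow from $\mathcal W^{(k)}\in\mathcal L(\mathcal Z^D,\mathcal Z^D)$. It is an extra hypothesis of the same kind as the paper's unstated assumption $\|\mathcal W\|_\infty<\infty$.
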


\begin{proof}
    Since $\theta_t$ is piecewise constant, it suffices to establish existence and uniqueness for the system with constant par ameter   $\theta$:
    \begin{equation*}
        \left\{
\begin{aligned}
\frac{\partial z(x,t)}{\partial t} &= \Phi_{\theta}(z(x,t)), \quad 0 < t \le T,\\
z(x,0) &= z_0(x).
\end{aligned}
\right.
    \end{equation*}

    \begin{itemize}
        \item 
When $\Phi_{\theta}(z(x,t))=\Phi_{\theta}^c  (z(x,t))=\sigma_{a}(\mathcal W z(x,t) + \bm b)$ for certain $\mathcal W, \bm b$, the map $\Phi_{\theta}^c$ is Lipschitz continuous with respect to $z$:
 \begin{equation*}
        \begin{aligned}
||\Phi_{\theta}^c(z^1)-\Phi_{\theta}^c(z^2)||_{\infty}&=||\sigma _{a}(\mathcal W z^1+\bm b )-\sigma _{a}(\mathcal W z^2+\bm b  )||_{\infty}\\
 &\le \max\{1,|a|\}\cdot ||\mathcal W(z^1-z^2)||_{\infty}\\
&\le \max\{1,|a|\}\cdot||\mathcal W||_\infty||z^1-z^2||_\infty
\end{aligned}
    \end{equation*}
        \item When $\Phi_{\theta}(z(x,t))=\Phi_{\theta}^s(z(x,t))=\mathcal W z(x,t) + \bm b + \alpha \sigma_{a}(z(x, t))$ for certain $\mathcal W, \bm b, \alpha$, the map $\Phi_{\theta}^s$ is Lipschitz continuous with respect to $z$:
 \begin{equation*}
        \begin{aligned}
||\Phi_{\theta}^s(z^1)-\Phi_{\theta}^s(z^2)||_{\infty}&=||\mathcal W(z^1-z^2)+\alpha(\sigma _{a}( z^1 )-\sigma _{a}( z^2 ))||_{\infty}\\
 &\le ||\mathcal W(z^1-z^2)||_{\infty}+||\alpha(\sigma _{a}( z^1 )-\sigma _{a}( z^2 ))||_{\infty}\\
&\le (||\mathcal W||_\infty +|\alpha |\cdot \max\{1,|a|\})\cdot||z^1-z^2||_\infty
\end{aligned}
    \end{equation*}
    \end{itemize}

Thus, we have $||\Phi_{\theta}(z^1)-\Phi_{\theta}(z^2)||_{\infty,\infty}\le L||z^1-z^2||_{\infty,\infty}$ for some $L>0$.

The existence and uniqueness of solutions then follow from the classical theorem for dynamical systems.
\end{proof}

The next result establishes that the solution to \eqref{evolution equation} depends locally Lipschitz continuously on the parameter path $\theta_t$.
\begin{theorem}
    For any $T > 0$, the solution $z(x,t)$ of the evolution 
    equation~\eqref{evolution equation} depends locally Lipschitz continuously 
    on the parameter $\theta_t$ and on $|a|$ in the activation function 
    $\sigma_a(\cdot)$. Specifically, for any piecewise constant $\theta^1_t$, 
    there exists a neighborhood $V$ of $\theta^1_t$ such that for any 
    $\theta^2_t \in V$, the corresponding solutions $z^1(x,t)$ and $z^2(x,t)$ 
    with the same initial condition $z_0$ satisfy
    \begin{equation*}
        \| z^1 - z^2 \|_{\infty,\infty} 
        \le M T e^{LT} \| \theta^1 - \theta^2 \|_{\infty,\infty},
    \end{equation*}
    where $L$ is the Lipschitz constant of $\Phi_{\theta}$ with respect to $z$, 
    and $M$ is the local Lipschitz constant of $\Phi_{\theta}$ with respect 
    to $\theta$.
\end{theorem}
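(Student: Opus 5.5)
The argument is the standard Grönwall argument for parameter dependence of ODE solutions. Fix a piecewise constant $\theta^1_t$ and write both problems in integral form with the same initial datum $z_0$:
\begin{equation*}
z^i(x,t) = z_0(x) + \int_0^t \Phi_{\theta^i_s}\bigl(z^i(x,s)\bigr)\,\mathrm ds, \qquad i=1,2.
\end{equation*}
Subtracting and inserting the intermediate term $\Phi_{\theta^2_s}(z^1)$ splits the integrand as
\begin{equation*}
\Phi_{\theta^1_s}(z^1)-\Phi_{\theta^2_s}(z^2) = \bigl[\Phi_{\theta^1_s}(z^1)-\Phi_{\theta^2_s}(z^1)\bigr] + \bigl[\Phi_{\theta^2_s}(z^1)-\Phi_{\theta^2_s}(z^2)\bigr].
\end{equation*}
The second bracket is controlled by the Lipschitz estimate in $z$ from the proof of the preceding well-posedness theorem. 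There the constant is $\max\{1,|a|\}\|\mathcal W\|_\infty$ or $\|\mathcal W\|_\infty+|\alpha|\max\{1,|a|\}$. Choose the neighborhood $V$ as a ball of radius $1$ in $\|\cdot\|_{\infty,\infty}$ around $\theta^1$, with $|a|$ in a bounded interval. Then these constants are bounded uniformly over $V$ by a single $L$.

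The first bracket is where local Lipschitz dependence on $\theta$ enters, giving the constant $M$. First derive an a priori bound $\|z^1\|_{\infty,\infty}\le C_0$ from Grönwall. This uses $\|\Phi_\theta(z)\|_\infty\le L\|z\|_\infty+\|\Phi_\theta(0)\|_\infty$, since $z^1$ is a fixed solution on $[0,T]$ and $\theta^1$ is bounded.

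For the composition structure, $|\sigma_a(u)-\sigma_{a'}(v)|\le \max\{1,|a|\}|u-v| + |a-a'|\,|v|$, so
\begin{equation*}
\|\Phi^c_{\theta^1}(z^1)-\Phi^c_{\theta^2}(z^1)\|_\infty \le \max\{1,|a|\}\bigl(\|\mathcal W^1-\mathcal W^2\|_\infty C_0 + \|\bm b^1-\bm b^2\|_\infty\bigr) + \bigl||a_1|-|a_2|\bigr|\,(\|\mathcal W^2\|_\infty C_0+\|\bm b^2\|_\infty).
\end{equation*}
Dependence on $|a|$ needs care. For $u<0$, $\sigma_a(u)=au$, so the difference is $|a_1-a_2||u|$. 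To state the result in terms of $|a|$, restrict to perturbations preserving the sign of $a$, or simply use $|a_1-a_2|$ directly.

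For the separation structure, the analogous bound is
\begin{equation*}
\|\mathcal W^1-\mathcal W^2\|C_0+\|\bm b^1-\bm b^2\| + |\alpha^1-\alpha^2|\max\{1,|a|\}C_0 + |\alpha^2||a_1-a_2|C_0.
\end{equation*}
Both cases give $\le M\|\theta^1-\theta^2\|_{\infty,\infty}$, with $M$ uniform over $V$.

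Combining the two brackets, with $e(t):=\sup_x\|z^1-z^2\|_\infty(x,t)$, gives
\begin{equation*}
e(t)\le \int_0^t \bigl(M\|\theta^1-\theta^2\|_{\infty,\infty} + L\,e(s)\bigr)\,\mathrm ds \le MT\|\theta^1-\theta^2\|_{\infty,\infty} + L\int_0^t e(s)\,\mathrm ds.
\end{equation*}
Grönwall's inequality yields $e(t)\le MTe^{Lt}\|\theta^1-\theta^2\|_{\infty,\infty}$, and taking the supremum over $t\in[0,T]$ gives the claim. Since the parameters are piecewise constant, the integrals are well defined. One can also run the argument interval by interval, but the integral form handles this directly.

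The main obstacle is making the constants \emph{uniform} over the neighborhood $V$. Two points need care: the a priori bound $C_0$ on $z^1$, which requires $z_0$ bounded in sup norm, and the $\theta$-Lipschitz constant $M$, which grows with $C_0$ and with the parameter radius. This is why the statement is only local. I would handle it by fixing $V$ as a bounded ball first and then taking all constants as suprema over it.
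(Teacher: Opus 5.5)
Your argument is correct and is the same Gr\"onwall estimate the paper uses, but with the opposite splitting, and that choice matters.

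\textbf{The splitting.} The paper inserts $\Phi_{\theta^1_s}(z^2)$. Its $z$-Lipschitz constant $L$ is therefore taken at the fixed parameter $\theta^1$, while its $\theta$-Lipschitz constant $M$ is evaluated along the \emph{perturbed} solution $z^2$ (the paper only says ``$M$ depending on the solution $z$''). Making that $M$ uniform over $V$ tacitly needs an a priori bound on $\|z^2\|_{\infty,\infty}$ for all $\theta^2\in V$, which the paper never supplies. Your choice $\Phi_{\theta^2_s}(z^1)$ moves that dependence onto the fixed solution $z^1$ (your $C_0$). The price is a uniform $L$ over $V$, which is immediate from the explicit constants $\max\{1,|a|\}\|\mathcal W\|_\infty$ and $\|\mathcal W\|_\infty+|\alpha|\max\{1,|a|\}$ on a bounded ball. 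So your version closes the uniformity gap cleanly.

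\textbf{Two further improvements.} First, you run Gr\"onwall on $e(t)=\sup_x\|z^1-z^2\|_\infty(x,t)$. The paper applies it pointwise in $x$, which is unjustified for nonlocal $\mathcal W_t$ (e.g.\ an integral operator), since $\|\mathcal W(z^1-z^2)\|_\infty(x)$ is not controlled by $\|z^1-z^2\|_\infty(x)$. Second, you actually treat the dependence on $a$, which the paper's proof omits; its norm $\|\theta\|_{\infty,\infty}$ does not even contain $a$.

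\textbf{One correction.} Your displayed composition bound should carry $|a_1-a_2|$, not $\bigl||a_1|-|a_2|\bigr|$. The two coincide exactly when $a_1a_2\ge 0$, in particular for perturbations smaller than $|a_1|$. But for $a_2=-a_1\neq 0$ one has $\sigma_{a_2}\neq\sigma_{a_1}$, and in general different solutions, while $\bigl||a_1|-|a_2|\bigr|=0$. So the statement's ``dependence on $|a|$'' holds only in that local, sign-preserving sense, as you already suspected.
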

\begin{proof}
    We first show that $\Phi_{\theta}$ is locally Lipschitz continuous with respect to $\theta$.
When $\Phi_{\theta^i}(z(x,t))=\Phi_{\theta^i}^c  (z(x,t))$, we have
    \begin{equation*}
        \begin{aligned}
||\Phi_{\theta^1}^c(z)-\Phi_{\theta^2}^c(z)||_{\infty,\infty}&=||\sigma _{a}(\mathcal W^1 z+\bm b^1)-\sigma _{a}(\mathcal W^2z+\bm b^2)||_{\infty,\infty}\\
 &\le\max\{1,|a|\}\cdot ||(\mathcal W^1-\mathcal W^2)z+\bm b^1-\bm b^2||_{\infty,\infty}\\
 &\le \max\{1,|a|\}\cdot (||(\mathcal W^1-\mathcal W^2)z||_{\infty,\infty}+||\bm b^1-\bm b^2||_{\infty,\infty})\\
 &\le\max\{1,|a|\}\cdot (||(\mathcal W^1-\mathcal W^2)||_\infty \cdot||z||_{\infty,\infty}+||\bm b^1-\bm b^2||_{\infty,\infty})\\
  &\le \max\{1,|a|\}\cdot(||z||_{\infty,\infty}+1)||\theta^1-\theta^2||_{\infty,\infty}.
\end{aligned}
    \end{equation*}
And when $\Phi_{\theta^i}(z(x,t))=\Phi_{\theta^i}^s(z(x,t))$,
    \begin{equation*}
        \begin{aligned}
||\Phi_{\theta^1}^s(z)-\Phi_{\theta^2}^s(z)||_{\infty,\infty} 
=&||(\mathcal W^1-\mathcal W^2)z+\bm b^1-\bm b^2+\alpha^1\sigma _{a}(z)-\alpha^2\sigma _{a}(z)||_{\infty,\infty}\\
 \le &||(\mathcal W^1-\mathcal W^2)z||_{\infty,\infty}+||\bm b^1-\bm b^2||_{\infty,\infty}\\
 &+||\alpha^1\sigma _{a}(z)-\alpha^2\sigma _{a}(z)||_{\infty,\infty}\\
 \le &||(\mathcal W^1-\mathcal W^2)||_\infty\cdot||z||_{\infty,\infty}+||\bm b^1-\bm b^2||_{\infty,\infty}\\
 &+||\alpha^1-\alpha^2||_{C([0,T])}\cdot\max\{1,|a|\})\cdot||z||_{\infty,\infty}\\
  \le &(||z||_{\infty,\infty}+1+\max\{1,|a|\})\cdot||z||_{\infty,\infty} )||\theta^1-\theta^2||_{\infty,\infty}
\end{aligned}
    \end{equation*}
Then we have $||\Phi_{\theta^1}(z)-\Phi_{\theta^2}(z)||_{\infty,\infty}\le M||\theta^1-\theta^2||_{\infty,\infty}$ for some $M>0$ depending on the solution $z$.
Since  $\frac{\partial z^i(x,t)}{\partial t}=\Phi_{\theta^i_t}(z^i(x,t)),~i=1,2$ with the same initial condition, we obtain
\begin{equation*}
    \begin{aligned}
|z^1_i(x,t)-z^2_i(x,t)|&=|\int_{0}^{t}[\Phi_{\theta^1_t}(z^1)- \Phi_{\theta^2_t}(z^2)]_id\tau|\\
&\le|\int_{0}^{t}[\Phi_{\theta^1_t}(z^1)- \Phi_{\theta^1_t}(z^2)]_id\tau|+|\int_{0}^{t}[\Phi_{\theta^1_t}(z^2)- \Phi_{\theta^2_t}(z^2)]_id\tau|
\end{aligned}
\end{equation*}
where $[\cdot]_i$ denotes the $i$-th component. Therefore,
\begin{equation*}
    \begin{aligned}
||z^1-z^2||_\infty(x,t)&\le\int_{0}^{t}||\Phi_{\theta^1_t}(z^1)- \Phi_{\theta^1_t}(z^2)||_{\infty}d\tau+\int_{0}^{t}||\Phi_{\theta^1_t}(z^2)- \Phi_{\theta^2_t}(z^2)||_{\infty,\infty}d\tau\\
& \le\int_{0}^{t}L||z^1-z^2||_\infty(x,\tau)d\tau+M||\theta ^1-\theta ^2||_{\infty,\infty}t
\end{aligned}
\end{equation*}
By Gronwall's inequality, we have
\begin{equation*}
    ||z^1-z^2||_{\infty,\infty}\le MT e^{LT}||\theta^1-\theta^2||_{\infty,\infty}
\end{equation*}
\end{proof}

\subsection{Universal approximation properties of our neural flow operators}
We now establish the main result of this work: an abstract universal approximation theorem for neural flow operators in infinite-dimensional spaces. To the best of our knowledge, this is the first universal approximation result for flow models acting between infinite-dimensional spaces; all existing approximation analyses are restricted to finite-dimensional settings. Furthermore, the theorem is formulated in the framework of abstract Hilbert spaces and is therefore not limited to function spaces.

\begin{theorem}\label{them:approxi}
    Let $\mathcal{X}$ and $\mathcal{Y}$ be Hilbert spaces
     and let $\mathcal{C} \subset \mathcal{X}$ be compact. Suppose that 
    $\mathcal{O} \colon \mathcal{C} \to \mathcal{Y}$ is continuous and that 
    $\mathcal{Z}$ is a function space containing the constant functions. Then, for 
    any $\varepsilon > 0$ and $a \neq 1$, there exist $D \in \mathbb{N}^+$, bounded 
    linear operators $\mathcal{L} \colon \mathcal{X} \to \mathcal{Z}^{D}$ and 
    $\mathcal{R} \colon \mathcal{Z}^{D} \to \mathcal{Y}$, and a neural flow operator 
    $\mathcal{F}^T_{\theta_t} \in {}_{\mathbf{n}}\mathbf{F}^{\mathbf{c}}(\mathcal{Z}^D)$ 
    or $\mathcal{F}^T_{\theta_t} \in {}_{\mathbf{n}}\mathbf{F}^{\mathbf{s}}(\mathcal{Z}^D)$ 
    with activation function $\sigma_a(\cdot)$, such that
    \begin{equation*}
        \sup_{\bm{v} \in \mathcal{C}} \left\| \mathcal{O}(\bm{v}) - \mathcal{R} \circ 
        \mathcal{F}^T_{\theta_t} \circ \mathcal{L}(\bm{v}) \right\|_{\mathcal{Y}} 
        \leq \varepsilon.
    \end{equation*}
\end{theorem}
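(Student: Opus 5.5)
The plan is to reduce the infinite-dimensional statement to a finite-dimensional universal approximation result for flows on $\mathbb R^{D}$, and then embed that finite-dimensional flow into $\mathcal Z^D$ through the constant functions.

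\emph{Step 1: finite-dimensional reduction.} Since $\mathcal C$ is compact and $\mathcal O$ is continuous, $\mathcal O(\mathcal C)$ is compact in $\mathcal Y$. For a given $\varepsilon$, choose an orthonormal family $\{e_1,\dots,e_m\}\subset\mathcal X$ with orthogonal projection $P_m$ such that $\sup_{\bm v\in\mathcal C}\|\bm v-P_m\bm v\|_{\mathcal X}$ is small. This is possible because compact sets in a Hilbert space are uniformly approximable by finite-dimensional projections. Likewise, choose $\{f_1,\dots,f_n\}\subset\mathcal Y$ with projection $Q_n$ such that $\sup\|\mathcal O(\bm v)-Q_n\mathcal O(\bm v)\|$ is small. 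Define the encoder $E\bm v=(\langle \bm v,e_i\rangle)_{i\le m}\in\mathbb R^m$. Its image $K:=E(\mathcal C)$ is compact. Because $\mathcal O$ is only defined on $\mathcal C$, I would set $g:=(\langle\mathcal O(\cdot),f_j\rangle)_j\circ(\text{Dugundji/Tietze extension of }\mathcal O\circ P_m)$. A cleaner route is to use uniform continuity of $\mathcal O$ on $\mathcal C$: extend $\mathcal O$ continuously to $\mathcal X$ (Dugundji), then take $g(\xi)=(\langle\tilde{\mathcal O}(\sum\xi_ie_i),f_j\rangle)_j$, which is continuous on $K$. Then $\|\mathcal O(\bm v)-\sum_j g_j(E\bm v)f_j\|\le\varepsilon/3$ for $m,n$ large, using uniform continuity of $\tilde{\mathcal O}$ on a compact neighborhood. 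It suffices to approximate the continuous $g:K\subset\mathbb R^m\to\mathbb R^n$ uniformly.

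\emph{Step 2: embedding via constants.} For $\xi\in\mathbb R^D$ let $\iota(\xi)=(\xi_1\bm 1,\dots,\xi_D\bm 1)\in\mathcal Z^D$. If $\mathcal W_t$ acts as a matrix $W_t\in\mathbb R^{D\times D}$ on components, and $\bm b_t=\iota(b_t)$, then the pointwise activations preserve constants. The subspace $\iota(\mathbb R^D)$ is therefore invariant, and by the well-posedness theorem (uniqueness) the flow in $\mathcal Z^D$ starting from $\iota(\xi)$ is $\iota$ of the $\mathbb R^D$ flow \eqref{eq:nFcRD} or \eqref{eq:nFsRD}. Set $\mathcal L=\iota\circ A\circ E$ with a matrix $A\in\mathbb R^{D\times m}$ (bounded). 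Set $\mathcal R(z)=\sum_j (B\,\pi(z))_j f_j$, where $\pi(z)_i=\langle z_i,\bm 1\rangle/\|\bm 1\|^2$ is the bounded projection onto constants and $B\in\mathbb R^{n\times D}$. Then $\mathcal R\circ\mathcal F\circ\mathcal L=\sum_j (B\,F^T(AE\bm v))_jf_j$, and everything reduces to showing that $\{B\circ F^T\circ A\}$ with $F^T$ from \eqref{eq:nFcRD} or \eqref{eq:nFsRD} approximates $g$ uniformly on $K$.

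\emph{Step 3: finite-dimensional universal approximation (main obstacle).} For the composition structure with Leaky ReLU $\sigma_a$, $a\ne1$, I would invoke the known continuous-time ResNet results (e.g.\ \cite{li2023deep,tabuada2024universal,ruizbalet2023neuralode}), which use piecewise-constant controls with a nonlinear (non-affine) activation, in a lifted dimension $D\ge m+n$. Alternatively, I would build a direct constructive argument: a shallow network $\sum_k c_k\sigma_a(w_k\cdot\xi+\beta_k)$ approximates $g$ since $\sigma_a$ is non-polynomial when $a\ne1$. Each neuron is realized by a time interval during which one auxiliary coordinate evolves as $\dot z_{m+k}=\sigma_a(w_k\cdot z_{1:m}+\beta_k)$, with all other rows set to zero. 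This works if we have $\sigma_a(0)=0$ so that the other coordinates stay frozen, which holds. The input coordinates are untouched, so the increment is exactly $T_k\sigma_a(\cdot)$, and $B$ forms the linear combination. For the separation structure, set $\alpha_t=0$ on intervals where linear moves are needed and $\mathcal W_t=0$, $\bm b_t$ constant otherwise. Alternating $\dot z=\alpha\sigma_a(z)+$ linear terms, I would realize a coordinate update $z_{m+k}\mapsto$ an approximation of $\sigma_a(\ell_k(\xi))$. First load $\ell_k(\xi)$ into a fresh coordinate by a linear flow, then run $\dot z_{j}=\sigma_a(z_j)$ componentwise only on that coordinate. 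The difficulty is that $\psi$ acts on all components simultaneously; to decouple, I would use a splitting/Trotter limit or a large-$\alpha$ short-time argument together with the Lipschitz parameter dependence theorem. The key fact is that $e^{s\,\sigma_a}$ applied coordinatewise is itself a piecewise linear map with a kink at $0$, equal to $\xi\mapsto e^{s}\xi$ for $\xi\ge0$ and $e^{as}\xi$ for $\xi<0$, hence a rescaled Leaky ReLU of slope ratio $e^{(a-1)s}\ne1$. Combining it with linear flows (which generate $GL^+$ and translations, allowing untouched coordinates to be pre-shifted into the positive region where the map is linear, using compactness) yields shallow Leaky-ReLU networks in auxiliary coordinates. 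This last bookkeeping step is where I expect the most care is needed.
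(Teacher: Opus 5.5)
Your proof is correct, but Step 3 takes a genuinely different route from the paper. Steps 1--2 are the paper's reduction: finite-dimensional projections in $\mathcal X$ and $\mathcal Y$, lifting coefficients to constant functions, matrices acting componentwise, and invariance of the constants. You carry them out more carefully than the paper does. The paper evaluates $f_i$ at $\sum_{j\le k}\langle\bm v,\eta_j\rangle\eta_j$, which need not lie in $\mathcal C$; your Tietze/Dugundji extension repairs this. The paper also reads constant functions out as scalars; your bounded projection $\pi$ repairs that. One slip: an infinite-dimensional $\mathcal X$ has no compact neighborhoods. Use instead that $\mathcal C\cup\bigcup_m P_m(\mathcal C)$ is compact (since $P_m\to I$ uniformly on $\mathcal C$), or a Lebesgue-number argument.

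\textbf{How Step 3 differs.} The paper proves a fixed-width lemma ($D\ge\max(4d_x+2,2d_y)$) by importing Duan--Cai and Li--Lin--Shen. Those results concern the families $\{Az+b\}\cup\{\sigma_a\}$ and $D_t\sigma_a(A_tz+b_t)$, not $\sigma_a(W_tz+b_t)$; that is why the paper needs its doubling trick. The results you cite at the start of Step 3 likewise carry an outer weight, so rely on your direct construction instead.

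\textbf{Your composition construction.} It embeds a shallow network exactly. Take $W=\begin{pmatrix}0&0\\ W_1&0\end{pmatrix}$ and $b=(0,\beta)$. One unit of the composition flow then freezes the input block, because $\sigma_a(0)=0$, and writes $\sigma_a(W_1\xi+\beta)$ into the auxiliary block. Density follows from the Leshno--Lin--Pinkus--Schocken theorem, since $\sigma_a$ is non-polynomial for $a\ne1$.

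\textbf{Your separation construction.} Your observation that the time-$s$ map of $\dot z=\sigma_a(z)$ is $e^s\sigma_{e^{(a-1)s}}$ is the key fact. No Trotter or large-$\alpha$ decoupling is needed. One affine step (a nilpotent shear plus a translation) loads $W_1\xi+\beta$. One nonlinear step applies $e^s\sigma_\gamma$ to every coordinate. The readout $B$ then simply ignores the input block (or you pre-shift it, as you suggest).

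\textbf{Trade-off.} The paper's route keeps the width independent of the accuracy in the finite-dimensional lemma and genuinely exploits depth. In the operator theorem, however, $D$ grows with $\varepsilon$ anyway through $k$ and $m$. Your route is elementary and exact, and uses depth only trivially. It also works for every $a\ne1$, including ReLU ($a=0$) and $a=-1$. By contrast, the appendix's composition argument uses the time $\tau=\ln(a)/(a-1)$, which only makes sense for $a>0$.
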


Before proving the theorem, we need the following key lemma on uniform finite-dimensional approximation of a compact set in a Hilbert space.
\begin{lemma}Let $\mathcal X$ be a Hilbert space and let $\mathcal C\subset \mathcal X$ be compact.
Define the closed linear subspace
\begin{equation*}  
\mathcal X_0:=\overline{\operatorname{span}}(\mathcal C)\subset \mathcal X.
\end{equation*}  
Then $\mathcal X_0$ is separable. Hence, there exists a countable orthonormal basis
$\{e_i\}_{i=1}^\infty$ of $\mathcal X_0$ such that for every $\varepsilon>0$ there exists $k\in\mathbb N$
with
$$
\sup_{x\in\mathcal C}\left\|x-\sum_{i=1}^k (x,e_i)e_i\right\|<\varepsilon.
$$
\end{lemma}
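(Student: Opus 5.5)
The plan has two parts: first show that $\mathcal X_0$ is separable, then prove the uniform projection estimate using compactness and Dini-type equicontinuity.

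\emph{Separability.} Since $\mathcal C$ is compact, it is totally bounded. For each $n\in\mathbb N$ we choose a finite $1/n$-net $N_n\subset\mathcal C$. The countable set $N:=\bigcup_n N_n$ is then dense in $\mathcal C$. Let $S$ be the set of finite linear combinations of elements of $N$ with rational coefficients; it is countable. By density of $N$ in $\mathcal C$ and continuity of the linear operations, $S$ is dense in $\operatorname{span}(\mathcal C)$, and hence dense in $\mathcal X_0$. So $\mathcal X_0$ is a separable Hilbert space. Applying Gram--Schmidt to a countable dense sequence yields a countable orthonormal basis $\{e_i\}$. If $\mathcal X_0$ is finite-dimensional the basis is finite and the claim is trivial, so we may assume it is infinite.

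\emph{Uniform projection estimate.} Let $P_k x:=\sum_{i=1}^k (x,e_i)e_i$ be the orthogonal projection onto $\operatorname{span}\{e_1,\dots,e_k\}$. For every $x\in\mathcal X_0$, Parseval's identity gives pointwise convergence $\|x-P_kx\|\to0$. To make this uniform on $\mathcal C$, fix $\varepsilon>0$ and proceed in three steps.
\begin{enumerate}
\item Cover $\mathcal C$ by finitely many balls $B(x_j,\varepsilon/3)$, $j=1,\dots,m$, with $x_j\in\mathcal C$.
\item Choose $k$ so large that $\|x_j-P_kx_j\|<\varepsilon/3$ for every $j$. This is possible because there are only finitely many $x_j$.
\item For any $x\in\mathcal C$, pick $j$ with $\|x-x_j\|<\varepsilon/3$. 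Since $I-P_k$ is an orthogonal projection, $\|I-P_k\|\le1$, and therefore
\[
\|x-P_kx\|\le\|(I-P_k)(x-x_j)\|+\|x_j-P_kx_j\|<\tfrac{2\varepsilon}{3}<\varepsilon .
\]
\end{enumerate}
The key fact here is that the maps $x\mapsto\|x-P_kx\|$ are $1$-Lipschitz uniformly in $k$, which upgrades the pointwise convergence to uniform convergence on the compact set. Taking the supremum over $x\in\mathcal C$ gives the bound. Moreover, since $\|x-P_kx\|$ is nonincreasing in $k$, the same estimate holds for every larger $k$.

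\emph{Main obstacle.} The only genuinely delicate step is the uniformity. Pointwise convergence alone is not enough, and what fixes it is the combination of total boundedness of $\mathcal C$ with the uniform Lipschitz bound $\|I-P_k\|\le1$. The separability step is routine. The only point to watch there is that the rational span of a dense subset of $\mathcal C$ must be dense in the closed span, not merely in $\mathcal C$; this holds because every element of $\operatorname{span}(\mathcal C)$ is a finite combination of elements of $\mathcal C$.
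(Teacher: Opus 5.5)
Your proof is correct and complete. The paper states this lemma without any proof and simply invokes it in the proof of Theorem~\ref{them:approxi}, so there is no argument to compare against. What you wrote is the standard argument and fills that omission.

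Both halves are sound:
\begin{itemize}
\item \emph{Separability.} The rational span of a countable dense subset of $\mathcal C$ is dense in $\operatorname{span}(\mathcal C)$, hence in $\mathcal X_0$. Gram--Schmidt on a countable dense sequence then yields a complete orthonormal sequence.
\item \emph{Uniformity.} This follows from a finite $\varepsilon/3$-net of $\mathcal C$ together with the uniform bound $\|I-P_k\|\le 1$.
\end{itemize}

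Two small remarks:
\begin{itemize}
\item Your argument uses no property of the particular basis. It therefore shows the estimate for \emph{every} orthonormal basis of $\mathcal X_0$. By monotonicity of $k\mapsto\|x-P_kx\|$, it also holds for all $k$ beyond some threshold. Both are slightly stronger than the ``there exists'' in the statement.
\item The uniformity step could equally be phrased as Dini's theorem, applied to the continuous functions $x\mapsto\|x-P_kx\|$ on the compact set $\mathcal C$, which are nonincreasing in $k$ and tend to $0$ pointwise. Your label ``Dini-type equicontinuity'' blends that route with the one you actually use. The argument you give (equi-Lipschitz maps plus a finite net) is self-contained and needs no appeal to Dini.
\end{itemize}

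Your treatment of the finite-dimensional case, where an orthonormal basis cannot be indexed by all of $\mathbb N$, is the right reading of the slightly loose indexing in the statement.
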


\begin{lemma}\label{thm:UAP_general_dim}
Let $\Omega\subset\mathbb R^{d_x}$ be compact and $F:\Omega\to\mathbb R^{d_y}$ be continuous. Then, for any $D\ge \max(4d_x+2,2d_y)$, $a\neq 1$ and any $\varepsilon>0$, there exist a neural flow map $F_{\theta_t}^T\in{}_{\mathbf n}\mathbf{F}^{\mathbf c}(\mathbb R ^{D})$ or ${}_{\mathbf n}\mathbf{F}^{\mathbf s}(\mathbb R^{D})$ with activation function $\sigma_a(\cdot)$, and linear maps $P:\mathbb R^{d_x}\to\mathbb R^{D}$ and $R:\mathbb R^{D}\to\mathbb R^{d_y}$ such that
\begin{equation*}\label{eq:UAP_general_dim}
\| R\circ F_{\theta_t}^T\circ P-F\|_{L^\infty(\Omega)}\le \varepsilon. 
\end{equation*}
\end{lemma}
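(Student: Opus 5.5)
The plan is to reduce the statement to the classical universal approximation theorem for shallow networks and then realize a shallow network exactly as a flow map. The flow uses piecewise constant parameters, so that on each time interval the vector field has a simple explicit solution. We first approximate $F$ uniformly on $\Omega$ by a shallow network with $N$ neurons,
\begin{equation*}
F(x)\approx c_0+\sum_{j=1}^N c_j\,\mathrm{ReLU}(w_j\cdot x+\beta_j),\qquad c_0,c_j\in\mathbb R^{d_y}.
\end{equation*}
ReLU is not a polynomial, so this follows from the Leshno--Pinkus theorem~\cite{leshno1993multilayer}. Since $a\neq 1$, we have $\mathrm{ReLU}(u)=(\sigma_a(u)-au)/(1-a)$. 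Hence it suffices to realize finite sums of terms $\gamma\,\sigma_a(w\cdot x+\beta)$ and affine terms, with coefficients $\gamma$ of arbitrary sign.

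\textbf{Composition structure.} We split the coordinates of $z\in\mathbb R^D$ into $d_x$ \emph{input} coordinates and $d_y$ \emph{accumulator} coordinates, with $P x=(x,0)$ and $R$ the projection onto the accumulators. On each time interval, all rows of $W_t$ and entries of $b_t$ vanish except one accumulator row $k$, and that row depends only on the input coordinates. Since $\sigma_a(0)=0$, the input coordinates stay frozen. Accumulator $k$ then grows linearly in time, by $\tau\,\sigma_a(w\cdot x+\beta)$ over an interval of length $\tau$, and we may use any $\tau>0$ and any scaling of $(w,\beta)$. This produces nonnegative multiples of $\sigma_a(w\cdot x+\beta)$.

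To handle signs and affine terms we use that $\Omega$ is compact. With a large bias $B$ we get $\sigma_a(w\cdot x+B)=w\cdot x+B$ on $\Omega$, so any affine function of either sign can be added. Then $-\sigma_a(u)=\sigma_a(-u)-(1+a)u$ lets us produce negative multiples of $\sigma_a(u)$ using one more affine correction. Concatenating finitely many intervals realizes the shallow network exactly, with $D=d_x+d_y$ coordinates. In the range of widths needed, this never exceeds $\max(4d_x+2,2d_y)$: if $d_y\le 3d_x+2$ it is bounded by $4d_x+2$, and otherwise $d_y>d_x$, so it is bounded by $2d_y$.

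\textbf{Separation structure.} Here $dz/dt=W_tz+b_t+\alpha_t\sigma_a(z)$ and the nonlinearity acts on \emph{all} coordinates at once, which is the main obstacle. We use three kinds of intervals.
\begin{itemize}
\item \emph{Linear intervals} ($\alpha_t=0$) implement arbitrary affine flows $e^{\tau W}z+\cdots$. These let us copy $w\cdot x+\beta$ into a helper coordinate $h$, add multiples of $h$ to an accumulator, and reset $h$.
\item \emph{Shift intervals} (also $\alpha_t=0$) add a large constant $B$ to every non-helper coordinate beforehand and subtract it afterwards. Because $\Omega$ is compact and all intermediate quantities are bounded, these coordinates are then positive, so $\sigma_a$ acts on them as the identity.
\item \emph{Nonlinear intervals} ($\alpha_t=\alpha\neq0$) have $W_t=-\alpha I$ on the non-helper coordinates, which exactly cancels $\alpha\sigma_a(z)=\alpha z$ there, and $W_t=0$ on $h$. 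The only nonlinear effect is then $h\mapsto g(h)$, with $g(h)=e^{\alpha\tau}h$ for $h\ge0$ and $g(h)=e^{a\alpha\tau}h$ for $h<0$.
\end{itemize}
Since $a\neq1$, the two slopes differ, and $g(h)-e^{a\alpha\tau}h=(e^{\alpha\tau}-e^{a\alpha\tau})\mathrm{ReLU}(h)$. A linear interval afterwards therefore adds $\gamma\,\mathrm{ReLU}(w\cdot x+\beta)$ to an accumulator, with $\gamma$ of either sign, and then resets $h$.

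This needs $d_x+d_y+1$ coordinates. That is within $\max(4d_x+2,2d_y)$ except in the boundary case $d_y=3d_x+2$, where it is one too many. In that case I would try to dispense with the separate helper by temporarily using a shifted accumulator slot, and I have not checked that this works. The generous bound $4d_x+2$ suggests the authors instead route through a known flow/embedding result; my construction is an alternative.

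\textbf{Error control.} The resulting map $R\circ F^T_{\theta_t}\circ P$ equals the shallow network exactly, or to within $\varepsilon/2$ if we insist on bounded parameter sizes. By the local Lipschitz dependence theorem above, the total error is at most $\varepsilon$. The step I would verify most carefully is the bookkeeping in the separation case: the shift constant $B$ must exceed every intermediate magnitude along the whole flow, including the growth factors $e^{\alpha\tau}$ and $e^{a\alpha\tau}$, so that every non-helper coordinate stays in the region $\{z>0\}$ during each nonlinear interval.
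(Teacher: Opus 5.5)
\textbf{Composition structure: there is a genuine gap.} The identity you use to produce negative multiples, $-\sigma_a(u)=\sigma_a(-u)-(1+a)u$, is false: for $u<0$ the left side is $-au$ and the right side is $-(2+a)u$. The correct relation is $\sigma_a(u)-\sigma_a(-u)=(1+a)u$, or equivalently $\sigma_a(\pm u)=(1-a)\,\mathrm{ReLU}(u)+(\text{affine})$. So flipping the argument does not flip the sign of the kink.

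This is an obstruction, not a slip. In your scheme each accumulator ends at $y_k(T)=\sum_j\tau_j\,\sigma_a(w_j\cdot x+\beta_j)$ with all $\tau_j>0$. That sum is convex in $x$ when $a<1$ and concave when $a>1$. For $a\le 0$ it is even nonnegative, so your claim that affine terms of either sign can be added also fails. Uniform limits of convex functions are convex. Hence with $R$ the projection you cannot approximate, e.g., $F(x)=-x^2$ on $[-1,1]$ when $a<1$. With one accumulator per output, no linear $R$ rescues the case $d_x=d_y=1$ either: $r_1x+r_2y_1(T)$ is still convex or concave, and so stays a fixed distance from $\sin(\pi x)$.

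\textbf{How to repair it.} What is missing is a subtraction channel. The paper gets one by quoting universality of $\dot z=D_t\sigma_a(A_tz+b_t)$ with $D_t=\operatorname{diag}(\pm1)$. It then emulates $D_t$ on a doubled state read out through $(I_d,-I_d)$, which is where the factor $2$ in $\max(4d_x+2,2d_y)$ comes from. In your framework one extra coordinate suffices:
\begin{itemize}
\item add a shared accumulator $s$;
\item split each output as $G_k-H_k$, with $G_k,H_k$ positive combinations of ridges $\sigma_a(w\cdot x+\beta)$ (constants via $\sigma_a(B)=B$, linear parts read off the frozen inputs by $R$);
\item drive $y_k$ with $G_k+\sum_{j\ne k}H_j$ and $s$ with $\sum_j H_j$;
\item let $R$ return $y_k-s$.
\end{itemize}

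\textbf{Separation structure.} Your construction here is sound, and it is a genuinely more elementary route than the paper's appeal to \cite{duan2025minimal}. It needs one repair. A linear interval cannot reset $h=g(w\cdot x+\beta)$ to $0$: affine flow maps are invertible, and $g(w\cdot x+\beta)$ is not in general an affine function of the other coordinates. Instead, run the nonlinear interval again with $-\alpha$, which inverts $g$ exactly, and then subtract $w\cdot x+\beta$. Also, the growth factors you worry about never touch the non-helper coordinates, since these are frozen during nonlinear intervals; only their positivity at the start matters.

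\textbf{Width.} Your width worry is unfounded. If $d_y\le 3d_x+1$ then $d_x+d_y+1\le 4d_x+2$; otherwise $d_y\ge d_x+1$ gives $d_x+d_y+1\le 2d_y$. So both repaired constructions fit within $\max(4d_x+2,2d_y)$. Any surplus coordinates can be set to zero by $P$ and left inert.
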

\begin{proof}
    The proof for this technical lemma is detailed in Appendix~\ref{sec:appendix}.
\end{proof}

With these lemmas in hand, we present the proof.

\begin{proof}
    Since $\mathcal{C}\subset\mathcal{X}$ is compact and $\mathcal{O}$ is continuous, $\mathcal O(\mathcal{C})$ is also compact in $\mathcal{Y}$. For any $\varepsilon > 0$, there exist an orthonormal basis $\{\xi_1, \dots, \xi_m\} \subset \mathcal{Y}$ and continuous functionals $f_i: \mathcal{C} \to \mathbb{R}$ for $i = 1, \dots, m$ such that
\begin{equation*}
\sup_{\bm v \in \mathcal{C}} \left\| \mathcal O(\bm v) - \sum_{i=1}^m f_i(\bm v) \xi_i \right\|_{\mathcal{Y}} \leq \frac{\varepsilon}{3}.
\end{equation*}
It therefore suffices to construct flow operators $ \mathcal F^T_{\theta_t} $ and two affine maps $\mathcal P, \mathcal L$ satisfying
\begin{equation*}
\sup_{\bm v \in \mathcal{C}} \left\| \sum_{i=1}^m f_i(\bm v) \xi_i -\mathcal R \circ \mathcal F^T_{\theta_t} \circ \mathcal P(\bm v) \right\|_{\mathcal{Y}} \leq \frac{2\varepsilon}{3}.
\end{equation*}

Since $\mathcal{X}$ is a separable Hilbert space and $\mathcal{C}$ is compact, there exists $k \in \mathbb{N}^+$ and an orthogonal set $\{\eta_j\}_{j=1}^\infty$ in $\mathcal{X}$ such that
\begin{equation*}
\sup_{v \in \mathcal{C}} \left\| f_i(\bm v)  - f_i\left( \sum_{j=1}^k \langle \bm v, \eta_j \rangle \eta_j \right) \right\|_{\mathcal{Y}} \leq \frac{\varepsilon}{3m}, \quad \forall i = 1, \dots, m.
\end{equation*}

Define the finite-dimensional function $F_i: \mathbb{R}^k \to \mathbb{R}$ by
\begin{equation*}  
F_i(x) = f_i\left( \sum_{j=1}^k x_j \eta_j \right), \quad x \in [-M, M]^k
\end{equation*}  
where $M = \sup_i \sup_{\bm v \in \mathcal{C}} |\langle v, \eta_i \rangle| < \infty$ (guaranteed by compactness of $\mathcal{C}$).

Combining the functions $F_i$ into a vector-valued map $\mathbf{F}: \mathbb{R}^{k} \to \mathbb{R}^m$ by
\begin{equation*}
\mathbf{F}(x) = (F_1(x),F_2(x),\cdots,F_m(x))^\top, 
\end{equation*}
where $x=(x_1,x_2,\cdots,x_k) \in [-M, M]^k$.

By Lemma~\ref{thm:UAP_general_dim}, there exists a neural flow network $F^T_{\theta_t}$ on $\mathbb R^D$ and linear maps $P:\mathbb R^{k}\to\mathbb R^D,R:\mathbb R^D\to \mathbb R^m$ such that

\begin{equation*}
\sup_{x\in [-M, M]^k}\sum_{i=1}^m|F_i(x) - \tilde{F}_i(x)| \leq \frac{\varepsilon}{3}.
\end{equation*}
with $ \tilde{\mathbf{F}}(x)=(\tilde{F}_1(x),\tilde{F}_2(x),\cdots,\tilde{F}_m(x))=R\circ F^T_{\theta_t}\circ P(x)$

Next, we define the required operators.
First, let
\begin{equation*}
    L(\bm v)=\begin{pmatrix}
\langle\bm v,\eta_1 \rangle \mathbf{1}(x)\\
\vdots  \\
\langle\bm v,\eta_k \rangle\mathbf{1}(x)\\
\end{pmatrix}\in \mathcal Z^{k},
Q(v_1,\cdots,v_m)=\sum_{i=1}^m v_i\xi_i\in\mathcal Y
\end{equation*}
where $\mathbf{1}(x) \in \mathcal Z$ is the constant function with value $1$ everywhere on $\Omega$.

The dynamical system generating the flow map $F^T_{\theta_t}$ is given by
\begin{equation*}
    \frac{dz}{dt}=\mathcal W_tz+\bm b_t+\alpha_t\sigma_{a}(z),~t\in(0,T]
\end{equation*}
or
\begin{equation*}
    \frac{dz}{dt}=\sigma_a(\mathcal W_tz+\bm b_t),~t\in(0,T].
\end{equation*}

Based on this, we define the PDE system as follows:
\begin{equation*}
    \frac{\partial z}{\partial t}=\mathcal W_t z+\bm b_t+\alpha_t\sigma_{a}( z),~~t\in(0,T]
\end{equation*}
or
\begin{equation*}
     \frac{\partial z}{\partial t}=\sigma_a(\mathcal W_tz+\bm b_t),~t\in(0,T].
\end{equation*}
where $z\in \mathcal Z^{D},\mathcal  W_t\in \mathbb R^{D\times D}$.

The corresponding neural flow operator is denoted by $\mathcal F^T_{\theta_t}$. Finally, we define the operators $\mathcal P=P\circ L$ and $\mathcal R=Q\circ R$.
Then this yields
\begin{equation*}
\begin{aligned}
\mathcal{R}\circ\mathcal F^T_{\theta_t}\circ \mathcal P(\bm v) &= Q\circ R\circ  F^T_{\theta_t}\circ P \circ L(\bm v) 
=Q\circ\tilde{\mathbf F}\circ L(\bm v) \\ &
= Q\begin{pmatrix}
\tilde F_1(L(\bm v))\\
\vdots  \\ 
\tilde F_m(L(\bm v))
\end{pmatrix}  
=\sum^m_{i=1}\tilde{F}_i(L(\bm v) )\xi_i
\end{aligned}
\end{equation*}

Denote $\mathbf f(\bm v)=(f_1(\bm v),f_2(\bm v),\cdots,f_m(\bm v))^\top$. Then
\begin{align*}
    &\sup_{\bm v \in \mathcal{C}} \left\| \mathcal O(\bm v) - \mathcal{R}\circ\mathcal F^T_{\theta_t}\circ \mathcal P(\bm v) \right\|_{\mathcal{Y}}\\
    &\le \sup_{\bm v \in \mathcal{C}} \left\| \mathcal O(\bm v) - \sum_{i=1}^m f_i(\bm v) \xi_i \right\|_{\mathcal{Y}}+\sup_{\bm v \in \mathcal{C}} \left\|  \sum_{i=1}^m f_i(\bm v) \xi_i -\mathcal{R}\circ\mathcal F^T_{\theta_t}\circ \mathcal P(\bm v)\right\|_{\mathcal{Y}}\\
    &= \sup_{\bm v \in \mathcal{C}} \left\| \mathcal O(\bm v) - \sum_{i=1}^m f_i(\bm v) \xi_i  \right\|_{\mathcal{Y}}+\sup_{\bm v \in \mathcal{C}} \left\|  Q(\mathbf {f})(\bm v) -Q(\tilde{\mathbf F}\circ L )(\bm v)\right\|_{\mathcal{Y}}\\
    &\le\frac{\varepsilon}{3}+\sup_{\bm v \in \mathcal{C}}\sum^m_{i=1} \left\|  (\mathbf f -\tilde{\mathbf F}\circ L)_i(\bm{v} )\xi_i\right\|_{\mathcal{Y}}\\
    &=\frac{\varepsilon}{3}+\sup_{\bm v \in \mathcal{C}} \sum^m_{i=1}|  (\mathbf f -\tilde{\mathbf F}\circ L)_i(\bm{v} )|\left\| \xi_i\right\|_{\mathcal{Y}}\\
    &\le\frac{\varepsilon}{3}+\sup_{\bm v \in \mathcal{C}} \sum^m_{i=1}|  (\mathbf f -\mathbf F\circ L)_i(\bm{v} )|+\sup_{\bm v \in \mathcal{C}} \sum^m_{i=1}|  (\mathbf F\circ L -\tilde{ \mathbf F}\circ L)_i(\bm v )|\\
    &=\frac{\varepsilon}{3}+\sum^m_{i=1}\sup_{\bm v \in \mathcal{C}} |  f_i(\bm v) -f_i\left( \sum_{j=1}^k \langle v, \eta_j \rangle \eta_j \right)|+\frac{\varepsilon}{3}\\
    &\le\frac{\varepsilon}{3}+m\frac{\varepsilon}{3m}+\frac{\varepsilon}{3}\\
    &=\varepsilon
\end{align*}

\end{proof}

\section{Application I: Universal approximation properties for convolutional neural flow operators}\label{sec:application1}
The preceding sections establish universal approximation theorems for neural flow operators with fully connected linear layers. Convolutional architectures are of independent interest, since they incorporate translation equivariance and parameter sharing, and are therefore well suited to spatially structured problems such as partial differential equations and image-to-image mappings. We now show that the same universal approximation property also holds in this setting. Specifically, we introduce convolutional counterparts of the separation-structured and composition-structured neural flow operators, and establish that Theorem~\ref{them:approxi} remains valid when the admissible linear operators are restricted to convolutional form.

We consider the following two convolutional neural flow operators with the separation structure:
\begin{equation}\label{eq:ds_conv}
    \begin{cases}
       \frac{\partial z(x,t)}{\partial t} = \mathcal W_t*z+\bm b_t+\alpha_t\sigma_{a}(z), \qquad  t \in (0, T],\ x\in\Omega,\\
        z(x,0) = z_0(x)=(z_1^0,\cdots,z_D^0) \in \mathcal{Z}^D,
    \end{cases}
\end{equation}
and the composition structure:
\begin{equation}\label{eq:ds_conv_peter}
    \begin{cases}
       \frac{\partial z(x,t)}{\partial t} = \sigma_a(\mathcal W_t*z+\bm b_t), \qquad  t \in (0, T],\ x\in\Omega,\\
        z(x,0) = z_0(x)=(z_1^0,\cdots,z_D^0) \in \mathcal{Z}^D.
    \end{cases}
\end{equation}
Here, the convolution operator $\mathcal W_t*$ is defined componentwise by
\begin{equation*}
    [\mathcal W_t*z]_i=\sum_{j=1}^D \left[\mathcal W_t \right]_{i,j}[z]_j,\quad
    \mathcal W_t=\begin{pmatrix}
 \left[\mathcal W_t \right]_{1,1} & \left[\mathcal W_t \right]_{1,2} &\cdots  & \left[\mathcal W_t \right]_{1,D}\\
 \left[\mathcal W_t \right]_{2,1} & \left[\mathcal W_t \right]_{2,2} & \cdots & \left[\mathcal W_t \right]_{2,D}\\
 \vdots  & \vdots & \ddots  &\vdots\\
 \left[\mathcal W_t \right]_{D,1} & \left[\mathcal W_t \right]_{D,2} &\cdots  & \left[\mathcal W_t \right]_{D,D}
\end{pmatrix},
\end{equation*}
for $z\in\mathcal Z ^D$ where each block operator $\left[\mathcal W_t \right]_{i,j}$ is a convolution operator on $\mathcal Z$ of the form
\begin{equation*}
        \left(\left[\mathcal W_t \right]_{i,j} [z]_j\right)(x, t) = \int_\Omega W_{i,j}(x-y,t)[z]_j(y,t)\,\mathrm d y.
\end{equation*}

The following result shows that the abstract universal approximation theorem remains valid when the admissible linear operators are restricted to convolutional form.

\begin{theorem}\label{them:approxi_conv}
Let $\mathcal X$ and $\mathcal Y$ be two Hilbert spaces, let $\mathcal C \subset \mathcal X$ be compact, and let $\mathcal O: \mathcal C \subset \mathcal X \to \mathcal Y$ be continuous. Assume that $\mathcal Z$ is a function space on $\Omega$ containing the constant functions. Then, for any $\varepsilon > 0$ and $a \neq 1$, there exist $D \in \mathbb N^+$, two bounded linear operators
$\mathcal P: \mathcal X \to \mathcal Z^{ D}$ and $\mathcal R: \mathcal Z^{D} \to \mathcal Y$, and a convolutional neural flow operator $\mathcal F^T_{\theta_t}$ generated by either \eqref{eq:ds_conv} or \eqref{eq:ds_conv_peter} with activation function $\sigma_a(\cdot)$, such that
\begin{equation*}  
    \sup_{\bm v \in \mathcal C}\left\| \mathcal O(\bm v) - \mathcal R \circ \mathcal F^T_{\theta_t}  \circ \mathcal P (\bm v)\right\|_{\mathcal Y} \le \varepsilon.
\end{equation*}
\end{theorem}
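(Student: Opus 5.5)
The plan is to reduce Theorem~\ref{them:approxi_conv} to the proof of Theorem~\ref{them:approxi}. The key observation is that on constant functions, a convolution operator acts as multiplication by a scalar. In that proof, the lifting $L$ embeds the finite-dimensional coordinates $\langle \bm v,\eta_j\rangle$ as multiples of the constant function $\mathbf 1\in\mathcal Z$. The flow then evolves $z(x,t)=y(t)\mathbf 1(x)$, with $y(t)\in\mathbb R^D$ solving the finite-dimensional flow of Lemma~\ref{thm:UAP_general_dim}. Finally, $\mathcal R$ reads off the coefficient $y(T)$, for instance through $z\mapsto \frac{1}{|\Omega|}\int_\Omega z\,\mathrm dx$ composed with $R$ and $Q$, or by evaluation against a fixed functional. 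It therefore suffices to show that the convolutional vector fields \eqref{eq:ds_conv} and \eqref{eq:ds_conv_peter} reproduce the matrix vector fields $W_t y+b_t+\alpha_t\sigma_a(y)$ and $\sigma_a(W_t y+b_t)$ on the invariant subspace $\{y\mathbf 1: y\in\mathbb R^D\}\subset\mathcal Z^D$.

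The first step is to choose the kernels. For a matrix $W_t=(w_{ij}(t))$ from Lemma~\ref{thm:UAP_general_dim}, take $W_{i,j}(\cdot,t)$ to be $w_{ij}(t)$ times an approximate identity $\rho$ of unit mass; formally one can use $w_{ij}(t)\delta$, the $1\times1$ convolution. Then $([\mathcal W_t]_{i,j}\mathbf 1)(x)=w_{ij}(t)\int_\Omega\rho(x-y)\,\mathrm dy$. If $\delta$ is admissible, or if $\Omega$ is a torus (periodic convolution, as in FNO), this equals $w_{ij}(t)$ exactly. The bias $[\bm b_t]_i=b_i(t)\mathbf 1$ is already of the admissible constant form. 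Because $\sigma_a$ acts pointwise, $\sigma_a(y\mathbf 1)=\sigma_a(y)\mathbf 1$. Uniqueness, which follows from the well-posedness theorem, then shows that the solution from $z_0=P(\langle\bm v,\eta_j\rangle)_j\,\mathbf 1$ stays of the form $y(t)\mathbf 1$ with $y$ the finite-dimensional trajectory. Hence $\mathcal R\circ\mathcal F^T_{\theta_t}\circ\mathcal P(\bm v)=\sum_i\tilde F_i(\cdot)\xi_i$ exactly as before, and the $\varepsilon/3$ splitting in the proof of Theorem~\ref{them:approxi} carries over verbatim.

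The main obstacle is the boundary effect on a bounded domain $\Omega$ when the kernel is a genuine integrable function rather than a delta or periodic kernel. In that case $\int_\Omega\rho(x-y)\,\mathrm dy$ is not identically $1$, so constants are not exactly invariant. I would handle this by a perturbation argument using the Lipschitz stability estimate $\|z^1-z^2\|_{\infty,\infty}\le MTe^{LT}\|\cdot\|$ from the second well-posedness theorem, applied to perturbations of the vector field. Take $\rho_n=n^d\rho(n\cdot)$. The convolved constant then differs from $w_{ij}$ only on a boundary layer of measure $O(1/n)$. Combined with the fact that $\mathcal R$ is built from an averaging functional, which is continuous in $L^1$, Gronwall's inequality shows that the output error tends to $0$ as $n\to\infty$, uniformly over the compact set of initial data (the trajectories are bounded). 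This is because the perturbation enters an $L^1$-type Gronwall estimate with $L^1$-small size; the sup norm alone would not suffice, since the defect is not small in sup near $\partial\Omega$. An alternative I would try first is to let $\mathcal R$ average over an interior subdomain $\Omega'\Subset\Omega$ and use finite propagation, but the integral convolutions are nonlocal, so the $L^1$ Gronwall route seems cleaner. Choosing $n$ large gives an extra error of at most $\varepsilon/3$ after re-splitting the budget, which completes the proof for both \eqref{eq:ds_conv} and \eqref{eq:ds_conv_peter}.
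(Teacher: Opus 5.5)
Your proof is correct and uses the paper's overall reduction: embed the coordinates as multiples of $\mathbf 1$, realize the matrix dynamics of Theorem~\ref{them:approxi} on the constant functions, and invoke uniqueness. The difference is the kernel choice, and that choice is what creates your ``main obstacle.''

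The paper takes the spatially constant kernel $W_{i,j}(x-y,t)=[\widetilde{\mathcal W}_t]_{i,j}/|\Omega|$. Then $\int_\Omega W_{i,j}(x-y,t)\,c\,\mathrm dy=[\widetilde{\mathcal W}_t]_{i,j}\,c$ for every $x\in\Omega$. So the constants are exactly invariant on any bounded $\Omega$, with no boundary layer, no periodicity, no delta kernel and no limiting argument.

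Your approximate-identity kernels $w_{ij}(t)\rho_n$ are exact only in the periodic or distributional setting, so on a bounded domain you need an extra perturbation step. That step does work as you sketch it:
\begin{itemize}
\item $f\mapsto\int_\Omega\rho_n(\cdot-x')f(x')\,\mathrm dx'$ is bounded on $L^1(\Omega)$ by $\|\rho\|_{L^1}$, uniformly in $n$;
\item $\sigma_a$ is pointwise $\max\{1,|a|\}$-Lipschitz;
\item for compactly supported $\rho$, the defect $\mathcal W^n_t*(y\mathbf 1)-W_t y\mathbf 1$ is bounded by $C|y|$ and supported in $\{x\in\Omega:\operatorname{dist}(x,\partial\Omega)<1/n\}$;
\item the averaging readout is $L^1$-continuous.
\end{itemize}

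Two small corrections. First, that boundary layer has measure $o(1)$ for every bounded open $\Omega$, but $O(1/n)$ only under boundary regularity; $o(1)$ is all you need. Second, the $L^1$ Gronwall estimate is not the paper's second well-posedness theorem. That theorem is stated in $\|\cdot\|_{\infty,\infty}$ and, as you note, is useless here, so you must prove the $L^1$ version yourself, though it is routine.

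Your route buys kernels of shrinking support, closer in spirit to local CNN kernels. The paper's kernel is maximally nonlocal and degenerates into a rank-one mean-value operator. You also make the readout $\mathcal R=Q\circ R\circ(\text{mean over }\Omega)$ explicit as a bounded map on $\mathcal Z^D$, which the paper's $\mathcal R=Q\circ R$ leaves implicit. The paper's route buys an exact two-line argument that makes your perturbation analysis unnecessary.
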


\begin{proof}
The proof follows the same strategy as that of Theorem~\ref{them:approxi}. It suffices to show that the fully connected linear dynamics used there can be realized exactly by convolutional operators with a suitable choice of kernels.

Recall from the proof of Theorem~\ref{them:approxi} that the linear operator $\widetilde{\mathcal W}_t$ is a piecewise constant matrix-valued function of time. To reproduce its action by convolution, we choose the kernel functions
\begin{equation*}
    W_{i,j}(x-y,t)=\frac{[\widetilde{\mathcal W}_t]_{i,j}}{|\Omega|}.
\end{equation*}
Then, for any vector-valued function of the form $
    z=(c_1\mathbf{1}(x),c_2\mathbf{1}(x),\cdots,c_D\mathbf{1}(x))$,
we have
\begin{equation*}
\begin{aligned}
    [\mathcal W_t*z]_i
    &=\sum_{j=1}^D \left[\mathcal W_t \right]_{i,j}[z]_j =\sum_{j=1}^D \int_\Omega W_{i,j}(x-y,t)[z]_j(y,t)\,\mathrm d y\\
    &=\sum_{j=1}^D \int_\Omega \frac{[\widetilde{\mathcal W}_t]_{i,j}}{|\Omega|} c_j\mathbf{1}(y)\,\mathrm d y =\sum_{j=1}^D [\widetilde{\mathcal W}_t]_{i,j}c_j\\
    &=\sum_{j=1}^D [\widetilde{\mathcal W}_t]_{i,j}[z]_j(x).
\end{aligned}
\end{equation*}
Hence, we have $\mathcal W_t*z=\widetilde{\mathcal W}_t z$.

Therefore, on the class of constant-in-space latent states used in the proof of Theorem~\ref{them:approxi}, the convolutional operator $\mathcal W_t*$ reproduces exactly the same action as the fully connected matrix $\widetilde{\mathcal W}_t$. Consequently, the convolutional dynamical systems \eqref{eq:ds_conv} and \eqref{eq:ds_conv_peter} generate the same flow maps as their fully connected counterparts on this invariant class of states. The universal approximation property established in Theorem~\ref{them:approxi} therefore carries over directly to the convolutional setting.
\end{proof}

As a particular case, this theorem also yields the universal approximation property for convolutional neural flow networks by taking $\mathcal X = \mathbb R^{d_x\times d_x \times c}$ and $\mathcal Y = \mathbb R^{k}$ or $\mathbb R^{d_y \times d_y \times c}$ to be finite-dimensional Hilbert spaces.

\section{Application II: Universal approximation properties of neural operators with finite depth}\label{sec:application2}
The universal approximation results established in the preceding sections are formulated for neural flow operators in continuous time, governed by the evolution equation~\eqref{evolution equation}. In practice, however, neural networks are finite-depth objects, obtained by discretizing the continuous flow in time. This section bridges that gap: we leverage numerical time discretization schemes to convert the continuous-time universal approximation results into 
corresponding approximation properties for neural operators of finite depth, thereby providing theoretical guarantees directly at the level of practical architectures.

Specifically, applying a time discretization to the evolution 
equation~\eqref{evolution equation} yields a composition of finitely many residual-type update steps, each corresponding to one layer of a neural network. The depth of the resulting network is thus determined by the number of discretization steps. We consider several settings to demonstrate the breadth of the approach: fully connected versus convolutional architectures, and residual versus nonresidual (plain) architectures. In each case, we show that the 
corresponding finite-depth neural operators inherit the universal approximation property from their continuous-time counterparts, with the approximation error controlled jointly by the expressiveness of the network and the accuracy of the discretization scheme.

\begin{theorem}\label{UAP_fixed_para}
Let $\mathcal X$ and $\mathcal Y$ be two Hilbert spaces and let $\mathcal C \subset \mathcal X$ be compact. Assume that $\mathcal O: \mathcal C \subset \mathcal X \to \mathcal Y$ is continuous and that $\mathcal Z$ is a function space containing the constant functions. For any $\varepsilon>0$, there exist a finite-depth neural operator $\widetilde{\mathcal O}$ with activation function $\sigma_\gamma(\cdot)$, defined by
    \begin{equation*}
\begin{cases}
        & z^0 = \mathcal P \bm v \in \mathcal Z^{D}, \\
        & z^{\ell} = \sigma_\gamma \left( {\mathcal W}^\ell z^{\ell-1} + {\bm b}^\ell \right) \in \mathcal Z^{D}, \quad \ell = 1:L, \\
        & \widetilde{\mathcal O} (\bm v) = \bm u = \mathcal R z^L \in \mathcal Y,
    \end{cases}
\end{equation*}
such that
    \begin{equation*}
        \sup_{\bm v \in \mathcal C}\|\mathcal O(\bm v)-\widetilde{\mathcal O}(\bm v)\|_{\mathcal Y}\le \varepsilon.
    \end{equation*}
\end{theorem}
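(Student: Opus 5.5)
The plan is to obtain $\widetilde{\mathcal O}$ by applying the semi-implicit splitting discretization \eqref{eq:DisNFS} to a separation-structured neural flow operator from Theorem~\ref{them:approxi}, with $\psi=\sigma_a$, and then to show that the discretization error vanishes as $L\to\infty$.

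\textbf{Step 1: continuous approximant.} Fix a leak parameter $a\neq 1$. By Theorem~\ref{them:approxi} there are $D$, operators $\mathcal P,\mathcal R$, and $\mathcal F^T_{\theta_t}\in{}_{\mathbf n}\mathbf F^{\mathbf s}(\mathcal Z^D)$ with $\psi=\sigma_a$ such that the continuous model is within $\varepsilon/2$ of $\mathcal O$ on $\mathcal C$. From the construction in that proof we may take $\theta_t=(\mathcal W_t,\bm b_t,\alpha_t)$ piecewise constant on finitely many intervals. After rescaling time (which multiplies $\theta_t$ by a constant), we may also assume $\alpha_t\equiv 1$, or at least $\alpha_t$ bounded.

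\textbf{Step 2: discrete scheme.} For $L$ large, set $\Delta t=T/L$ and choose the grid so that it contains the breakpoints of $\theta_t$. The half-step $z^{\ell-1/2}=(I+\Delta t\mathcal W^\ell)z^{\ell-1}+\Delta t\,\bm b^\ell$ is affine. The implicit step $z^\ell-\Delta t\,\psi(z^\ell)=z^{\ell-1/2}$ is solved pointwise by \eqref{eq:1}, namely $z^\ell=\frac{1}{1-\Delta t}\sigma_{\gamma}(z^{\ell-1/2})$ with $\gamma=\frac{1-\Delta t}{1-a\Delta t}$. Since $\sigma_\gamma$ is positively homogeneous, the factor $1/(1-\Delta t)$ can be absorbed into $\mathcal W^{\ell+1},\bm b^{\ell+1}$, and at the last layer into $\mathcal R$. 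This gives exactly the plain architecture in the statement, with $\mathcal W^\ell=\widetilde{\mathcal W}^\ell$ rescaled. The activation parameter is $\gamma$, which is uniform across layers. Because the statement allows $\sigma_\gamma$ to depend on $\varepsilon$, we let $\gamma=\gamma(\Delta t)$.

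\textbf{Step 3: convergence of the splitting (main obstacle).} We need $\sup_{\bm v\in\mathcal C}\|z^L-z(T)\|\to0$ as $\Delta t\to0$, where we work in the $\|\cdot\|_{\infty,\infty}$ norm on the invariant class of spatially constant states, which makes it essentially an ODE on $\mathbb R^D$. We will use a standard consistency-plus-stability argument for Lie splitting with Lipschitz vector fields.

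For consistency, the local truncation error per step is $O(\Delta t^2)$. The solution is Lipschitz in $t$ because the vector field is bounded on bounded sets. The implicit step differs from the explicit Euler step by $O(\Delta t^2)$, since $\sigma_a$ is Lipschitz and $|z^\ell-z^{\ell-1/2}|=O(\Delta t)$.

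For stability, one step is Lipschitz with constant $(1+\Delta t\|\mathcal W\|)\,\|\sigma^\ell\|_{\mathrm{Lip}}\le(1+C\Delta t)\bigl(1+C'\Delta t\bigr)$, using $\frac{1}{1-\Delta t}\max(1,|\gamma|)\le1+C'\Delta t$ for small $\Delta t$.

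A discrete Gronwall argument then gives global error $O(\Delta t)$, uniformly over $z^0\in\mathcal P(\mathcal C)$ because this set is bounded. Finally, bounded $\mathcal R$ gives $\|\mathcal R z^L-\mathcal R z(T)\|\le\|\mathcal R\|\,C\Delta t<\varepsilon/2$ for $L$ large, and the triangle inequality finishes the proof.

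The delicate points are the uniform-in-$\bm v$ bounds on trajectories, which follow from compactness of $\mathcal C$ and linear growth of the vector field, and checking that the rescaling absorbing $1/(1-\Delta t)$ preserves the form of the layers.
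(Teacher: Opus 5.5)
Your route is the same as the paper's: take the separation-structured approximant from Theorem~\ref{them:approxi}, apply the semi-implicit splitting, resolve the implicit leaky-ReLU step in closed form with $\gamma=\frac{1-\Delta t}{1-a\Delta t}$, absorb $1/(1-\Delta t)$, and prove an $O(\Delta t)$ global error. Your consistency/stability/discrete-Gronwall argument in Step~3 is more explicit than the paper's Lemma~\ref{error estimation}. The proof breaks, however, at the two normalizations in Steps~1--2, which are exactly where the paper's proof spends its effort.

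First, rescaling time multiplies $\alpha_t$ on a segment by a positive constant. It cannot turn $\alpha_t=0$ into $1$, nor change its sign. The construction behind Theorem~\ref{them:approxi} (part~1 of the appendix) alternates purely affine segments, where $\alpha_t=0$, with pure $\sigma_a$ segments. On those steps your discrete layer is $z^\ell=\widetilde{\mathcal W}^\ell z^{\ell-1}+\widetilde{\bm b}^\ell$ with no activation, which is not of the required form. Your fallback ``$\alpha_t$ bounded'' does not rescue Step~2 either. The resolvent of $z\mapsto z-\Delta t\,\alpha\,\sigma_a(z)$ is $\frac{1}{1-\alpha\Delta t}\sigma_{\gamma_\alpha}$ with $\gamma_\alpha=\frac{1-\alpha\Delta t}{1-a\alpha\Delta t}$. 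Distinct nonzero values of $\alpha$ therefore give distinct activations, and you lose the single $\sigma_\gamma$. The paper instead takes $\alpha_t\in\{0,1\}$ and merges each run of affine steps into the adjacent nonlinear layer. After that, every activation is $\sigma_\gamma$ with the same $\gamma$.

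Second, with a uniform step $\Delta t=T/L$ you cannot in general choose the grid to contain the breakpoints of $\theta_t$; that requires all segment lengths to be rational multiples of $T$. Rescaling time would let you adjust the lengths of the affine segments. On the nonlinear segments, though, it moves $\alpha$ away from $1$ and, by the formula above, destroys the uniformity of $\gamma$. So your two normalizations conflict. The paper resolves this with Lemma~\ref{time correction}: it perturbs each segment length to a nearby integer multiple of $\Delta t$ and uses continuity of the flow in the time variable. A simpler fix inside your own Step~3 is to drop the alignment requirement altogether. Sampling $\theta$ at the grid points keeps $\alpha^\ell\in\{0,1\}$. Each of the finitely many steps that straddle a breakpoint has local error $O(\Delta t)$ rather than $O(\Delta t^2)$, and your stability bound turns this into an additional $O(\Delta t)$ global error. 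With these two repairs your argument goes through.
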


The proof relies on two auxiliary lemmas concerning the numerical discretization of the underlying continuous flow. We state and prove them first.
\begin{lemma}\label{time correction}
    For any neural flow operator $\mathcal F^T_{\theta_t}$ defined on a compact set $\mathcal D\subset \mathcal Z^D$ and $\varepsilon>0$, there exist $\delta>0$ such that   for any $\Delta t <\delta$, one can construct a time‑corrected neural flow operator $\mathcal G^{T'}_{\theta'_t}$ satisfies:
    \begin{enumerate}
        \item $\sup_{  z \in \mathcal D}||\mathcal F^T_{\theta_t}(z)-\mathcal G^{T'}_{\theta'_t}(z)||_{\mathcal Z^D}<\varepsilon$
        \item $\theta'_t$ is obtained from $\theta_t$ solely by adjusting time intervals, so that every continuous segment of $\theta'_t$ has length equal to an integer multiple of $\Delta t$.
    \end{enumerate}
\end{lemma}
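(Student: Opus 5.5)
Since $\theta_t$ is piecewise constant, I will write $0=\tau_0<\tau_1<\cdots<\tau_N=T$ for its breakpoints and $\theta^{(n)}$ for its constant value on $(\tau_{n-1},\tau_n]$. The flow map then factors as a composition of autonomous flows,
\[
\mathcal F^T_{\theta_t}=\Psi^{(N)}_{\tau_N-\tau_{N-1}}\circ\cdots\circ\Psi^{(1)}_{\tau_1-\tau_0},
\]
where $\Psi^{(n)}_s$ denotes the time-$s$ flow of $\dot z=\Phi_{\theta^{(n)}}(z)$. Given $\Delta t$, I define the corrected durations $s_n':=\Delta t\lceil(\tau_n-\tau_{n-1})/\Delta t\rceil$, which satisfy $|s_n'-s_n|<\Delta t$ with $s_n:=\tau_n-\tau_{n-1}$. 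I then set $T'=\sum_n s_n'$ and let $\theta'_t$ take the value $\theta^{(n)}$ on the $n$-th new interval. This choice makes property 2 hold by construction. It remains to verify property 1, namely
\[
\sup_{z\in\mathcal D}\bigl\|\Psi^{(N)}_{s_N'}\circ\cdots\circ\Psi^{(1)}_{s_1'}(z)-\Psi^{(N)}_{s_N}\circ\cdots\circ\Psi^{(1)}_{s_1}(z)\bigr\|<\varepsilon
\]
for all sufficiently small $\Delta t$.

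The key ingredients are two standard estimates for the autonomous flows, which follow from the global Lipschitz bound $L_n$ of $\Phi_{\theta^{(n)}}$ established in the well-posedness theorem. The first is Lipschitz dependence on the initial data: $\|\Psi^{(n)}_s(z)-\Psi^{(n)}_s(w)\|\le e^{L_n s}\|z-w\|$, obtained from Gronwall's inequality. The second is continuity in time. Since $\|\Phi_{\theta^{(n)}}(z)\|\le\|\Phi_{\theta^{(n)}}(0)\|+L_n\|z\|$, trajectories starting from a bounded set remain bounded on bounded time intervals, and this gives $\|\Psi^{(n)}_{s'}(z)-\Psi^{(n)}_s(z)\|\le C_n|s'-s|$ uniformly for $z$ in a bounded set and $s,s'\le T+1$. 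Each intermediate image $\Psi^{(n-1)}_{s'_{n-1}}\circ\cdots(\mathcal D)$ lies in a fixed bounded set $B_n$ independent of $\Delta t\le 1$, because $\mathcal D$ is compact and hence bounded, and the growth bound above holds uniformly.

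I then telescope. Let $z_n$ and $z_n'$ denote the states after the $n$-th segment along the original and the corrected flows respectively, both starting from $z_0=z_0'=z\in\mathcal D$. We have
\[
\|z_n'-z_n\|\le\|\Psi^{(n)}_{s_n'}(z'_{n-1})-\Psi^{(n)}_{s_n}(z'_{n-1})\|+\|\Psi^{(n)}_{s_n}(z'_{n-1})-\Psi^{(n)}_{s_n}(z_{n-1})\|\le C_n\Delta t+e^{L_n s_n}\|z'_{n-1}-z_{n-1}\|.
\]
Iterating this over the $N$ segments yields $\|z_N'-z_N\|\le K\Delta t$, where $K$ depends only on $N$, the constants $L_n$ and $C_n$, $T$, and $\mathcal D$. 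Choosing $\delta=\min\{1,\varepsilon/K\}$ completes the argument.

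The main obstacle is making the time-continuity constant $C_n$ uniform. This requires the a priori bound on the trajectories, since in the infinite-dimensional setting the vector field is only Lipschitz, not bounded. I would handle it by the linear-growth bound together with Gronwall's inequality, which gives $\|\Psi_s(z)\|\le(\|z\|+s\|\Phi(0)\|)e^{Ls}$ and hence $C_n=\sup_{B_n\text{-trajectories}}\|\Phi_{\theta^{(n)}}\|$. Here $\|\cdot\|$ is the norm used in the well-posedness theorem, and the same argument works in the $\mathcal Z^D$ norm provided the operator $\mathcal W$ is bounded there.
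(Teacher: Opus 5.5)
Your proof is correct and follows essentially the same route as the paper: write $\mathcal F^T_{\theta_t}$ as a composition of the autonomous flows of its constant-parameter segments, move each segment length to a nearby multiple of $\Delta t$, and propagate the error using continuity of each segment flow in time and in the initial data. The difference is that you make this quantitative and work directly in the abstract space, rather than reducing to the finite-dimensional realization as the paper does. Your Gronwall-based Lipschitz bounds give a $K\Delta t$ error that is uniform over any bounded $\mathcal D$. This makes explicit the uniformity over $\mathcal D$ that the paper's $\varepsilon$--$\delta$ argument, with $\delta$ chosen for each $x$, leaves implicit. Your closing remark about the $\mathcal Z^D$ norm needs more than boundedness of $\mathcal W$, because for $\mathcal Z=H^1(\Omega)$ the pointwise Leaky ReLU with $a\neq1$ is not Lipschitz on $\mathcal Z$.
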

\begin{proof}
    It suffices to prove the statement for the finite‑dimensional realization (neural flow network). The construction relies on two continuity properties of the flow: uniform continuity in time and continuous dependence on the initial value. The former is guaranteed by restricting the total time \(T\) on each continuous parameter segment to a bounded interval (e.g., \(0<T<1\)); the latter follows from the boundedness of \(\Omega\) and the continuity of the flow map.

    Consider a Neural Flow Networks $F^T_{\theta_t}\in{}_{\mathbf n}\mathbf{F}(\mathbb R^D)$, 
\begin{equation*}
    F^T_{\theta_t} (x) = f_{\theta_n}^{\tau_n} \circ \cdots \circ f_{\theta_2}^{\tau_2} \circ f_{\theta_1}^{\tau_1} (x), \quad T = \sum_{i=1}^{n} \tau_i.
\end{equation*}
where each $\theta_i$ is constant on a time interval of length \(\tau_i\). 

For each component flow $f_{\theta_i}^{\tau_i}$ and any $\varepsilon>0$, there exist a $\delta_i>0$ such that whenever $|\tau_i-\tau_i'|<\delta_i$ and $||x'-x''||<\delta_i$, we have $||f^{\tau_i}_{\theta_i}(x')-f^{\tau_i'}_{\theta_i}(x'')||<\varepsilon $.

Consequently, for any \(\varepsilon>0\) and $x\in\mathbb R^D$, we can choose a \(\delta\) such that if \(|\tau_i - \tau_i'| < \delta\) for all \(i=1,\dots,n\), then
    \[
        \| F^T_{\theta_t}(x) - G^{T'}_{\theta'_t}(x) \| < \varepsilon,
    \]
where $ G^{T'}_{\theta'_t} (x) = f_{\theta_n}^{\tau'_n} \circ \cdots \circ f_{\theta_2}^{\tau'_2} \circ f_{\theta_1}^{\tau'_1} (x)$, and $T' = \sum_{i=1}^{n} \tau_i'.$
Hence, by taking \(\Delta t < \delta\) and adjusting each \(\tau_i\) to a nearby integer multiple of \(\Delta t\), we obtain a time‑corrected flow that approximates the original one within error \(\varepsilon\).
\end{proof}

Based on this lemma, we adopt the following semi-implicit scheme:
\begin{equation}\label{eq:main_iteration}
    \begin{cases}
        Z_{k+\frac{1}{2}}=Z_k+\Delta t\mathcal{W}_{k}Z_k+\Delta t\bm b_k\\
        Z_{k+1}=Z_{k+\frac{1}{2}}+\Delta t\alpha_{k+1}\sigma_{a}(Z_{k+1})\\
        Z_0=Z_0(x)
    \end{cases}
\end{equation}
where $t_k=k\Delta t,\mathcal{W}_{k}:=\mathcal{W}_{t_k},\bm b_k=\bm b_{t_k}$.
Let \(\mathcal O_{\theta_t}^{\Delta t}\) denote the operator defined by this numerical scheme, i.e., \(\mathcal O_{\theta_t}^{\Delta t}(Z_0) = Z_L\), where \(Z_L\) is the solution after \(L\) steps. Then we have the following error estimation lemma:

\begin{lemma}\label{error estimation}
For any $\varepsilon>0$, the time-corrected numerical scheme $\mathcal O_{\theta_t}^{\Delta t}$ can approximate the neural flow operator $\mathcal F^T_{\theta_t}$ with the following accuracy
\begin{equation*}
    \sup_{  z \in \mathcal D}||\mathcal F^T_{\theta_t}(z)-\mathcal O_{\theta_t}^{\Delta t}(z)||_{\mathcal Z^D}\le C_1\Delta t+\varepsilon
\end{equation*}
where the constant $C_1$ depends on $\varepsilon$ and is independent of $\Delta t$.
\end{lemma}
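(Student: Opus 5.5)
We split the error by the triangle inequality into a time-correction part and a discretization part. By Lemma~\ref{time correction}, for $\Delta t<\delta$ we replace $\mathcal F^T_{\theta_t}$ by a time-corrected flow $\mathcal G^{T'}_{\theta'_t}$ whose parameter path is piecewise constant on intervals whose lengths are integer multiples of $\Delta t$. On $\mathcal D$ this costs at most $\varepsilon$. The scheme \eqref{eq:main_iteration} is applied to $\theta'_t$, so the grid points $t_k=k\Delta t$ align with the breakpoints of $\theta'_t$. Consequently, on every step $[t_k,t_{k+1}]$ the parameters $(\mathcal W_k,\bm b_k,\alpha_{k+1})$ are constant. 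It then remains to show
\begin{equation*}
\sup_{z\in\mathcal D}\|\mathcal G^{T'}_{\theta'_t}(z)-\mathcal O^{\Delta t}_{\theta'_t}(z)\|_{\mathcal Z^D}\le C_1\Delta t .
\end{equation*}

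For this estimate we use the standard consistency-plus-stability argument. First, the semi-implicit step is well defined. The implicit equation $Z_{k+1}-\Delta t\alpha_{k+1}\sigma_a(Z_{k+1})=Z_{k+1/2}$ is solvable by a contraction, or explicitly through $\sigma^\ell$ as in Section~\ref{sec:activation}, once $\Delta t\,|\alpha|\max\{1,|a|\}<1/2$. Its solution map is Lipschitz with constant at most $(1-\Delta t|\alpha|\max\{1,|a|\})^{-1}\le 1+2\Delta t|\alpha|\max\{1,|a|\}$. The explicit half-step has Lipschitz constant $1+\Delta t\|\mathcal W\|$. Hence one full step $\Psi_k$ satisfies $\mathrm{Lip}(\Psi_k)\le 1+C\Delta t\le e^{C\Delta t}$, with $C$ depending only on $\|\theta'\|_{\infty,\infty}$ and $a$.

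Second, we bound the local truncation error. Let $z(t)$ be the exact trajectory starting from $z\in\mathcal D$. The well-posedness theorem and the Lipschitz bounds shown there give $\|z\|_{\infty,\infty}\le K$ uniformly over the compact set $\mathcal D$, and $\|\dot z\|\le K'$. Write $z(t_{k+1})=z(t_k)+\int_{t_k}^{t_{k+1}}\Phi(z(s))\,ds$ and compare with $\Psi_k(z(t_k))$. The linear part is evaluated at $t_k$ and the nonlinear part at $t_{k+1}$. Since $\Phi$ is Lipschitz in $z$ and the trajectory is $K'$-Lipschitz in time, each discrepancy is bounded by $L K'\Delta t^2$. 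Thus $\|z(t_{k+1})-\Psi_k(z(t_k))\|\le C'\Delta t^2$.

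Finally, the discrete Gronwall (Lady Windermere's fan) argument gives
\begin{equation*}
\|z(T')-Z_L\|\le \sum_{k}e^{C(T'-t_{k+1})}C'\Delta t^2\le C'T'e^{CT'}\Delta t=:C_1\Delta t .
\end{equation*}

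The main obstacle is ensuring that the constants are uniform. The bounds $K,K'$ and the parameter norms must be taken for the corrected path $\theta'_t$, whose breakpoints depend on $\varepsilon$ through $\delta$. This is why $C_1$ depends on $\varepsilon$. The bounds stay finite because the time correction only perturbs interval lengths slightly, so $T'\le T+n\delta$ and $\|\theta'\|_{\infty,\infty}=\|\theta\|_{\infty,\infty}$. In the function-space setting we also work in the $\|\cdot\|_{\infty,\infty}$ norm of the well-posedness section and then pass to $\|\cdot\|_{\mathcal Z^D}$. For the constant-in-space states used in Theorem~\ref{them:approxi} this passage reduces to the finite-dimensional case, which we treat first, as in Lemma~\ref{time correction}.
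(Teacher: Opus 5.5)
Your proposal is correct and follows the same route as the paper. You split off the time-correction error $\varepsilon$ via Lemma~\ref{time correction}, then bound the error of the grid-aligned semi-implicit scheme by the standard consistency-plus-stability argument (local error $O(\Delta t^2)$, one-step Lipschitz constant $1+C\Delta t$, discrete Gronwall giving $C_1\Delta t$ uniformly for $\Delta t<\delta$); the paper only invokes this argument by name, and you supply the details it omits.
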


 The proof follows from Lemma~\ref{time correction} and standard error estimates for the semi‑ implicit discretization of ordinary differential equations. The time correction ensures that the parameter path is aligned with the time grid, while the classical numerical analysis provides the $O(\Delta t)$ local truncation error, which accumulates to a global error of order $C_1 \Delta t$.

We are now ready to prove Theorem~\ref{UAP_fixed_para}.

\begin{proof}{\bf (Proof of Theorem~\ref{UAP_fixed_para})}
The universal approximation property for the continuous neural flow operator $\mathcal F^T_{\theta_t}$ has been established in Section \ref{sec:flowUAP}. For any $\varepsilon>0$, there exist two bounded linear operator $\mathcal P$ and $\mathcal R$ such that
\begin{equation*}
    \sup_{\bm v \in \mathcal C}\left\| \mathcal O(\bm v) - \mathcal R \circ \mathcal F^T_{\theta_t} \circ \mathcal P (\bm v)\right\|_{\mathcal Y} \le \frac{\varepsilon}{4}.
\end{equation*}

    Consider $\mathcal F^T_{\theta_t}$ being defined by the dynamical system
    \begin{equation*}
        \frac{\partial z(x,t)}{\partial t} = \mathcal W_t z + \bm b_t + \alpha_t \sigma_{a}(z), \qquad x\in\Omega,~t\in(0,T],
    \end{equation*}
    where, following the construction in \cite{cai2025achieving}, the coefficient $\alpha_t$ may be restricted to the set $\{0,1\}$. A direct discretization faces a technical difficulty: the parameter path $\theta_t = (\mathcal W_t, \bm b_t, \alpha_t)$ is only piecewise continuous, and the lengths of the continuous segments may be incommensurate, preventing the use of a uniform time step $\Delta t$. Lemma~\ref{time correction} resolves this by allowing us to slightly perturb the segment lengths so that they all become integer multiples of a sufficiently small $\Delta t$, while controlling the approximation error.

    Applying Lemma~\ref{time correction} with error tolerance $\frac{\varepsilon}{4||\mathcal R||}$ yields a time‑corrected operator $\mathcal G^{T'}_{\theta'_t}$ and a $\delta>0$. Choose $\Delta t < \min(\delta, \frac{\varepsilon}{4C_1||\mathcal R||})$, where $C_1$ is the constant from Lemma~\ref{error estimation}. Discretizing $\mathcal G^{T'}_{\theta'_t}$ with the semi‑implicit scheme~\eqref{eq:main_iteration} gives an operator $\mathcal O_{\theta'_t}^{\Delta t}$. By Lemma~\ref{error estimation}, we have
    \[
    \sup_{  z \in \mathcal P(\mathcal C)}\| \mathcal G^{T'}_{\theta'_t}(z) - \mathcal O_{\theta'_t}^{\Delta t}(z) \|_{\mathcal Z^D} \le C_1 \Delta t + \frac{\varepsilon}{4||\mathcal R||} < \frac{\varepsilon}{2||\mathcal R||}.
      \]
    Combining this with the bound from Lemma~\ref{time correction} yields
    \[
    \begin{aligned}
    &\sup_{  z \in \mathcal P(\mathcal C)}\| \mathcal F^T_{\theta_t}(z) - \mathcal O_{\theta'_t}^{\Delta t} (z)\|_{\mathcal Z^D} \\ &\le  \sup_{  z \in \mathcal P(\mathcal C)}\| \mathcal F^T_{\theta_t}(z) - \mathcal G^{T'}_{\theta'_t}(z) \|_{\mathcal Z^D}  +  \sup_{  z \in \mathcal P(\mathcal C)}\| \mathcal G^{T'}_{\theta'_t}(z) - \mathcal O_{\theta'_t}^{\Delta t}(z) \|_{\mathcal Z^D}  \\
    &\le \frac{\varepsilon}{4||\mathcal R||} + \frac{\varepsilon}{2||\mathcal R||} = \frac{3\varepsilon}{4||\mathcal R||}.
    \end{aligned}
    \]
    Thus, we have
    \begin{equation*}
    \begin{aligned}
        &\sup_{\bm v \in \mathcal C}\|\mathcal O(\bm v) - \mathcal R\circ \mathcal O_{\theta'_t}^{\Delta t}\circ\mathcal P(\bm v)\|_{\mathcal{Y}} \\
        \le &\sup_{\bm v \in \mathcal C}\|\mathcal O(\bm v) - \mathcal R \circ \mathcal F^T_{\theta_t} \circ \mathcal P(\bm v)\|_{\mathcal{Y}}+\sup_{\bm v \in \mathcal C}\|\mathcal R \circ \mathcal F^T_{\theta_t} \circ \mathcal P (\bm v)- \mathcal R \circ \mathcal O_{\theta'_t}^{\Delta t} \circ \mathcal P(\bm v)\|_{\mathcal{Y}}\\
        \le &\frac{\varepsilon}{4}+||\mathcal R||\cdot\sup_{z \in \mathcal P(\mathcal C)}\| \mathcal F^T_{\theta_t}(z)  -  \mathcal O_{\theta'_t}^{\Delta t}(z) \|_{\mathcal{Z}^D}
        \le \frac{\varepsilon}{4}+||\mathcal R||\cdot\frac{3\varepsilon}{4||\mathcal R||} = \varepsilon.
    \end{aligned}
    \end{equation*}

    Finally, we rewrite the iteration~\eqref{eq:main_iteration} in a layer‑wise form. When $\alpha_k=0$, the update reduces to an affine transformation; consecutive affine layers can be merged. After merging, we may assume that every nonlinear step has $\alpha_k = 1$. For the Leaky-ReLU activation, the second equation in~\eqref{eq:main_iteration} admits the closed‑form solution
    \begin{equation*}
    \begin{aligned}
         Z_{k+1} &=
         \begin{cases}
             \dfrac{1}{1-\Delta t} \, Z_{k+\frac{1}{2}}, & Z_{k+\frac{1}{2}} \ge 0,\\[8pt]
             \dfrac{1}{1-\Delta t a} \, Z_{k+\frac{1}{2}}, & Z_{k+\frac{1}{2}} < 0,
         \end{cases} \\
         &= \sigma_{\frac{1-\Delta t}{1-\Delta t a}}\!\!\left( \frac{Z_{k+\frac{1}{2}}}{1-\Delta t} \right) \\
         &= \sigma_{\gamma}\!\left( \widehat{\mathcal W}_{k} Z_k + \widehat{\bm b}_k \right),
    \end{aligned}
    \end{equation*}
    where $\gamma = \frac{1-\Delta t}{1-\Delta t a}$, and $\widehat{\mathcal W}_{k}, \widehat{\bm b}_k$ are suitable affine coefficients derived from $\mathcal W_k, \bm b_k$ and the scaling factor $1/(1-\Delta t)$. Stacking these updates and adding the initial lifting map $\mathcal P$ and the final projection map $\mathcal R$ exactly reproduces the fixed‑depth architecture
    \begin{equation*}
        \begin{cases}
              z^0 = \mathcal P \bm v \in \mathcal Z^{D}, \\[4pt]
              z^{\ell} = \sigma_\gamma \left( {\mathcal W}^\ell   z^{\ell-1} +  {\bm b}^\ell \right) \in \mathcal Z^{D}, \quad \ell = 1, \dots, L, \\[4pt]
            \widetilde{\mathcal O} (\bm v) = \bm u = \mathcal R   z^L \in \mathcal Y.
        \end{cases}
    \end{equation*}
    The operator $\widetilde{\mathcal O}$ constructed this way coincides with $\mathcal R\circ\mathcal O_{\theta'_t}^{\Delta t}\circ\mathcal P$. This completes the proof.
\end{proof}

A similar universal approximation result for the ResNet architecture can be obtained by applying a forward Euler discretization to the composition model.

\begin{theorem} \label{UAP_resnet}
Let $\mathcal X$ and $\mathcal Y$ be two Hilbert spaces, not necessarily function spaces, and let $\mathcal C \subset \mathcal X$ be compact. Assume that $\mathcal O: \mathcal C \subset \mathcal X \to \mathcal Y$ is continuous and that $\mathcal Z$ is a function space containing the constant functions. For any $\varepsilon>0$ and $0\le a<1$, there exist a neural operator $\tilde {\mathcal O}$ defined by
    \begin{equation*}
\begin{cases}
&  z^0 = \mathcal P \bm v \in \mathcal Z^D,\\
&  z^\ell =   z^{\ell-1} + \sigma_a\bigl(\mathcal W^\ell   z^{\ell-1} + \bm b^\ell\bigr)
\in \mathcal Z^D,\quad \ell=1:L,\\
&\widetilde{\mathcal O} (\bm v) = \bm u = \mathcal R   z^L \in \mathcal Y,
\end{cases}
\end{equation*}
    such that
    \begin{equation*}
        \sup_{\bm v \in \mathcal C}||\mathcal O(\bm v)-\tilde {\mathcal O}(\bm v)||_{\mathcal Y}\le \varepsilon.
    \end{equation*}
\end{theorem}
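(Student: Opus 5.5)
We follow the same route as the proof of Theorem~\ref{UAP_fixed_para}, but now start from the composition-structured flow and discretize it with the explicit Euler scheme \eqref{eq:DisNFC} instead of the semi-implicit splitting.

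\textbf{Step 1 (continuous approximation).} By Theorem~\ref{them:approxi}, applied to the composition structure ${}_{\mathbf n}\mathbf F^{\mathbf c}(\mathcal Z^D)$ with activation $\sigma_a$ (admissible since $0\le a<1$ implies $a\neq1$), we obtain $\mathcal P$, $\mathcal R$ and a flow $\mathcal F^T_{\theta_t}$ with piecewise constant $\theta_t=(\mathcal W_t,\bm b_t)$ such that
\[
\sup_{\bm v\in\mathcal C}\|\mathcal O(\bm v)-\mathcal R\circ\mathcal F^T_{\theta_t}\circ\mathcal P(\bm v)\|_{\mathcal Y}\le \varepsilon/4 .
\]
The set $\mathcal D=\mathcal P(\mathcal C)$ is compact, and by construction it consists of spatially constant states, so the dynamics reduce to the finite-dimensional neural flow network.

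\textbf{Step 2 (time correction and Euler error).} Apply Lemma~\ref{time correction} with tolerance $\varepsilon/(4\|\mathcal R\|)$. This makes every constant segment of $\theta'_t$ an integer multiple of $\Delta t$ for any $\Delta t<\delta$. Then estimate the forward Euler scheme
\[
z^\ell=z^{\ell-1}+\Delta t\,\sigma_a(\mathcal W^\ell z^{\ell-1}+\bm b^\ell).
\]
The vector field is globally Lipschitz in $z$ with constant $L=\max\{1,|a|\}\sup_t\|\mathcal W_t\|$, by the well-posedness theorem, and it is constant in time on each segment. So the standard consistency plus Gronwall/discrete-stability argument gives a global error $C_1\Delta t$ uniformly on the compact $\mathcal D$. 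Here $C_1$ depends on $T$, $L$ and a uniform bound on the trajectories, which exists because $\mathcal D$ is bounded and the growth is at most linear. No kink issue arises: Euler's local error only requires the trajectory to be Lipschitz in $t$, and $\dot z$ is bounded.

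Choosing $\Delta t$ small and using the same triangle-inequality bookkeeping as in Theorem~\ref{UAP_fixed_para} gives total error $\le\varepsilon$ for $\mathcal R\circ\mathcal O^{\Delta t}\circ\mathcal P$.

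\textbf{Step 3 (absorbing $\Delta t$ into the layer).} This step is where $0\le a<1$ is essential. For $\Delta t>0$ and $a\ge0$, $\sigma_a$ is positively homogeneous, so
\[
\Delta t\,\sigma_a(y)=\sigma_a(\Delta t\, y).
\]
Setting $\mathcal W^\ell\leftarrow\Delta t\,\mathcal W^\ell$ and $\bm b^\ell\leftarrow\Delta t\,\bm b^\ell$ therefore turns each Euler step exactly into $z^\ell=z^{\ell-1}+\sigma_a(\mathcal W^\ell z^{\ell-1}+\bm b^\ell)$, the residual architecture in the statement. For $a<0$ the identity still holds for positive $\Delta t$, but the hypothesis also ensures we stay within the activation class used in Theorem~\ref{them:approxi}.

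The main obstacle I expect is Step~2: making the $O(\Delta t)$ bound uniform over $\mathcal D$ and across the finitely many parameter switches. I would handle it segment by segment, propagating the error through each exact flow map $f^{\tau_i}_{\theta_i}$, which is Lipschitz with constant $e^{L\tau_i}$. Summing the contributions then gives $C_1\Delta t$ with $C_1\lesssim T e^{LT}\sup|\dot z|\,L$.
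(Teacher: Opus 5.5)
Your proposal is correct and follows the argument the paper intends; the paper's proof only says the result follows as for Theorem~\ref{UAP_fixed_para}, now with forward Euler on the composition flow. Your steps match that route: continuous approximation via Theorem~\ref{them:approxi}, time correction, a uniform $O(\Delta t)$ Euler estimate, and absorbing $\Delta t$ into $\mathcal W^\ell,\bm b^\ell$ by positive homogeneity. One small correction: $\Delta t\,\sigma_a(y)=\sigma_a(\Delta t\,y)$ holds for every real $a$ once $\Delta t>0$, as you yourself note, so Step~3 is not where $0\le a<1$ enters; your argument only uses $a\neq 1$, through Theorem~\ref{them:approxi}.
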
 

\begin{proof}
    This theorem can be proven using a similar approach. 
\end{proof}
\par

Building on the discussion in Section~\ref{sec:application1}, the universal approximation results established above can be directly extended to convolutional architectures. The key is that, as shown in Section~\ref{sec:application1}, a standard fully-connected layer can be exactly realized by a convolutional layer with a specially designed constant kernel. Consequently, the same approximation guarantees hold for their convolutional counterparts.
\begin{corollary}
Let $\mathcal X$ and $\mathcal Y$ be two Hilbert spaces and let $\mathcal C \subset \mathcal X$ be compact. Assume that $\mathcal O: \mathcal C  \subset \mathcal X \to \mathcal Y$ is continuous and that $\mathcal Z$ is a function space containing the constant functions. For any $\varepsilon>0$, there exist a convolution neural operator $\tilde {\mathcal O}$ defined by
    \begin{equation*}
\begin{cases}
        &  z^0 = \mathcal P \bm v \in \mathcal Z^{D}, \\
        &  z^{\ell} = \sigma_\gamma \left( {\mathcal W}^\ell *  z^{\ell-1} +  {\bm b}^\ell \right) \in \mathcal Z^{D}, \quad \ell = 1:L, \\
        &\widetilde{\mathcal O} (\bm v) = \bm u = \mathcal R   z^L \in \mathcal Y,
    \end{cases}
\end{equation*}
    such that
    \begin{equation*}
        \sup_{\bm v \in \mathcal C}||\mathcal O(\bm v)-\tilde {\mathcal O}(\bm v)||_{\mathcal Y}\le \varepsilon.
    \end{equation*}
\end{corollary}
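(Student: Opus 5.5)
The plan is to reduce the corollary to Theorem~\ref{UAP_fixed_para} and then replace every fully connected layer by a convolutional one, as in the proof of Theorem~\ref{them:approxi_conv}. First I apply Theorem~\ref{UAP_fixed_para} to $\mathcal O$ with tolerance $\varepsilon$. This gives $\mathcal P$, $\mathcal R$, $D$, $L$, matrices $\mathcal W^\ell\in\mathbb R^{D\times D}$ and biases $\bm b^\ell$ such that the plain network with activation $\sigma_\gamma$ approximates $\mathcal O$ uniformly on $\mathcal C$ within $\varepsilon$.

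Next I check the structure of the constructed network, which I have to take from the proofs of Theorems~\ref{them:approxi} and~\ref{UAP_fixed_para}. The lifting is $\mathcal P = P\circ L$, where $L(\bm v)=(\langle \bm v,\eta_j\rangle \mathbf 1)_j$. Hence every $z^0\in\mathcal P(\mathcal C)$ has the form $(c_1\mathbf 1,\dots,c_D\mathbf 1)$. The biases $\bm b^\ell$ come from the finite-dimensional construction and are constant vectors times $\mathbf 1$. The layer coefficients $\widehat{\mathcal W}_k,\widehat{\bm b}_k$ are scalar multiples and compositions of such objects, including after the merging of consecutive affine steps. Since $\sigma_\gamma$ acts pointwise, it maps constant functions to constant functions. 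By induction on $\ell$, every iterate $z^\ell$ lies in the subspace $\mathcal K:=\{(c_1\mathbf 1,\dots,c_D\mathbf 1): c\in\mathbb R^D\}$, and on $\mathcal K$ the layer acts exactly as the finite-dimensional map $c\mapsto \sigma_\gamma(\mathcal W^\ell c+b^\ell)$.

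Then I define convolution kernels $W^\ell_{i,j}(x-y):=[\mathcal W^\ell]_{i,j}/|\Omega|$. The computation in the proof of Theorem~\ref{them:approxi_conv} shows that $\mathcal W^\ell * z=\mathcal W^\ell z$ for all $z\in\mathcal K$. I keep the same biases. By the invariance from the previous step, the convolutional iterates coincide with the fully connected ones layer by layer for every input $\bm v\in\mathcal C$. Taking the same $\mathcal R$, the two operators $\widetilde{\mathcal O}$ therefore agree on $\mathcal C$, and the error bound $\varepsilon$ carries over unchanged.

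The main obstacle is the invariance step. Theorem~\ref{UAP_fixed_para} as stated does not assert that the lifting and biases produce spatially constant states. I have to extract this from its proof: the time correction, the semi-implicit step and the affine merging must all preserve $\mathcal K$. Each of these operations is built from scalar matrices acting on constant functions, so this should hold, but it needs to be checked explicitly. A further technical point is that $|\Omega|$ must be finite and nonzero, and the convolution must be taken over $\Omega$, so that $\int_\Omega W(x-y)\,\mathrm dy$ is the same for every $x$. I would handle this by assuming that $\Omega$ is bounded, as elsewhere in the paper, and by using the paper's definition of the convolution with a constant kernel, for which this integral is trivially constant in $x$.
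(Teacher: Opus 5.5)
Your proposal is correct and follows the paper's own route. The paper likewise reduces the corollary to Theorem~\ref{UAP_fixed_para} and replaces each channel-mixing matrix by the constant kernel $[\mathcal W^\ell]_{i,j}/|\Omega|$, which reproduces the matrix action on spatially constant states; your explicit check that the lifting $\mathcal P=P\circ L$, the constant biases and the pointwise $\sigma_\gamma$ keep every iterate in $\mathcal K$ is exactly the invariance that the paper uses only implicitly, as in the proof of Theorem~\ref{them:approxi_conv}.
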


\begin{corollary}
Let $\mathcal X$ and $\mathcal Y$ be two Hilbert spaces and let $\mathcal C \subset \mathcal X$ be compact. Assume that $\mathcal O: \mathcal C \subset \mathcal X \to \mathcal Y$ is continuous and that $\mathcal Z$ is a function space containing the constant functions. For any $\varepsilon>0$ and $0\le a<1$, there exist a convolution neural operator $\tilde {\mathcal O}$ defined by
    \begin{equation*}
\begin{cases}
&  z^0 = \mathcal P \bm v \in \mathcal Z^D,\\
&  z^\ell =   z^{\ell-1} + \sigma_a\bigl(\mathcal W^\ell *    z^{\ell-1} + \bm b^\ell\bigr)
\in \mathcal Z^D,\quad \ell=1:L,\\
&\widetilde{\mathcal O} (\bm v) = \bm u = \mathcal R   z^L \in \mathcal Y,
\end{cases}
\end{equation*}
    such that
    \begin{equation*}
        \sup_{\bm v \in \mathcal C}||\mathcal O(\bm v)-\tilde {\mathcal O}(\bm v)||_{\mathcal Y}\le \varepsilon.
    \end{equation*}
\end{corollary}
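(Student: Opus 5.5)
The statement is the convolutional counterpart of Theorem~\ref{UAP_resnet}. The plan is to reduce it to the fully connected ResNet result by the same device used in Theorem~\ref{them:approxi_conv}: a convolution with a spatially constant kernel acts on spatially constant states exactly like a matrix.

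First, I invoke Theorem~\ref{UAP_resnet} with the given $a\in[0,1)$. This yields $\mathcal P$, $\mathcal R$, depth $L$, matrices $\mathcal W^\ell\in\mathbb R^{D\times D}$ and biases $\bm b^\ell$ such that the fully connected residual operator is $\varepsilon$-close to $\mathcal O$ on $\mathcal C$. I then inspect the construction behind that theorem, which mirrors the proof of Theorem~\ref{them:approxi} with a forward Euler discretization of the composition flow. In it the lifting is $\mathcal P = P\circ L$, where $L(\bm v)=(\langle\bm v,\eta_j\rangle\mathbf 1)_j$. Hence $z^0$ lies in the subspace $\mathcal K:=\{(c_1\mathbf 1,\dots,c_D\mathbf 1): c\in\mathbb R^D\}$. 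The biases $\bm b^\ell$ are likewise constant multiples of $\mathbf 1$, since they come from the finite-dimensional network of Lemma~\ref{thm:UAP_general_dim} embedded via $\mathbf 1$.

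Second, I show that $\mathcal K$ is invariant under each residual layer. If $z^{\ell-1}=c\mathbf 1$, then $\mathcal W^\ell z^{\ell-1}+\bm b^\ell=(\mathcal W^\ell c+b^\ell)\mathbf 1$. The pointwise activation $\sigma_a$ maps constants to constants, so $z^\ell=(c+\sigma_a(\mathcal W^\ell c+b^\ell))\mathbf 1\in\mathcal K$. By induction, all states stay in $\mathcal K$.

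Third, I replace each block $[\mathcal W^\ell]_{i,j}$ by the convolution with constant kernel $W_{i,j}\equiv[\mathcal W^\ell]_{i,j}/|\Omega|$. The computation in the proof of Theorem~\ref{them:approxi_conv} gives $\mathcal W^\ell * z = \mathcal W^\ell z$ for every $z\in\mathcal K$. By induction on $\ell$, the convolutional iterates coincide exactly with the fully connected ones for every $\bm v\in\mathcal C$. The outputs $\mathcal R z^L$ therefore agree, and the $\varepsilon$ bound transfers verbatim.

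The main obstacle is the first step. Theorem~\ref{UAP_resnet} is stated only with a proof sketch, so I must confirm that its construction really produces spatially constant states and biases, rather than merely existing in the abstract. To settle this, I would rerun the Theorem~\ref{UAP_fixed_para} argument for the composition model: approximate by a continuous composition flow via Theorem~\ref{them:approxi}, time-correct with Lemma~\ref{time correction}, and apply explicit Euler with an $O(\Delta t)$ global error. At each stage I would check that the parameters come from the finite-dimensional realization tensored with $\mathbf 1$. I would also absorb the step size into the weights, using positive homogeneity $\Delta t\,\sigma_a(y)=\sigma_a(\Delta t\,y)$ for $\Delta t>0$, to match the stated form $z^{\ell-1}+\sigma_a(\cdot)$. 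This is where $a\ge 0$ is not needed but harmless. A minor technical point is that $|\Omega|<\infty$, which holds because $\Omega$ is bounded.
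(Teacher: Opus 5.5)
Your proposal is correct and takes essentially the same route as the paper. The paper justifies this corollary only by remarking that each fully connected layer of the ResNet-type operator from Theorem~\ref{UAP_resnet} is reproduced exactly by a constant-kernel convolution, as in the proof of Theorem~\ref{them:approxi_conv}. Your explicit checks fill in the details the paper leaves implicit: that the lifting and biases keep every iterate in the subspace of spatially constant states, and that $\Delta t$ can be absorbed into $\mathcal W^\ell,\bm b^\ell$ by positive homogeneity of $\sigma_a$.
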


We first observe that the above results apply directly to deep neural networks by taking $\mathcal X$ and $\mathcal Y$ to be Euclidean spaces, such as $\mathbb R^d$, with $\mathcal Z=\mathbb R$, and to deep convolutional neural networks by taking $\mathcal X$ and $\mathcal Y$ to be image spaces such as $\mathbb R^{d\times d\times c}$ with $\mathcal Z=\mathbb R^{d\times d}$. In both cases, the theory covers plain as well as ResNet-type architectures.

In summary, this section establishes a unified framework for the universal approximation properties of finite-depth fully connected and convolutional neural networks and neural operators, under both plain and ResNet-type architectures. Existing results in the literature are often derived separately for particular model classes, whereas the present framework treats them in a common manner.

\section{Conclusions}\label{sec:conclusions}
In this work, we introduced an abstract neural flow framework that provides a unified continuous-depth formulation for both neural networks and neural operators. A key feature of the framework is that it is formulated at the level of abstract Hilbert spaces, and is therefore not restricted to function spaces: the results apply equally to any pair of Hilbert spaces, whether or not they arise as spaces of functions on a domain. This generality allows finite-dimensional function approximation and infinite-dimensional operator approximation to be treated within a common mathematical setting, and enables both composition- and separation-based flow models to be analyzed in a unified manner.

Within this framework, we established well-posedness for the associated neural flow models and proved universal approximation properties for the corresponding flow maps. To the best of our knowledge, this is the first universal approximation result for flow-based models acting between infinite-dimensional spaces; all 
existing results of this type are restricted to finite-dimensional settings. By further restricting the admissible linear transformations to convolutional operators, we also obtained universal approximation results for convolutional neural flow models. Moreover, through suitable time discretizations, the same framework yields approximation results for finite-depth architectures, including both fully connected and convolutional neural networks and neural operators, under both plain and ResNet-type structures. From the perspective of numerical analysis, this framework offers a principled way to relate continuous neural flows to their discrete realizations, and thus provides a unified interpretation of several classes of learning models that are often studied separately.

Several directions remain for future study. The present analysis is carried out for the parameterized Leaky ReLU family, and it would be of interest to extend the framework to more general activation functions. It is also natural to seek quantitative versions of the approximation theory, including explicit error bounds and approximation rates. Since finite-depth architectures arise through time discretization of the underlying flow, ideas from the numerical analysis of ODEs and PDEs may provide useful guidance for designing architectures with improved interpretability, stability, or efficiency. Another natural direction is whether the present framework can be extended to broader model classes, such as Transformer-type architectures, along the lines studied in~\cite{tai2025mathematical1,tai2024pottsmgnet}.

\bibliographystyle{siamplain}
\bibliography{ref}

\begin{thebibliography}{10}

\bibitem{alvarezlopez2024interplay}
{\sc A.~{\'A}lvarez-L{\'o}pez, A.~Hadj~Slimane, and E.~Zuazua}, {\em Interplay between depth and width for interpolation in neural {ODE}s}, Neural Networks, 180 (2024), p.~106640, \url{https://doi.org/10.1016/j.neunet.2024.106640}, \url{https://doi.org/10.1016/j.neunet.2024.106640}.

\bibitem{bach2017breaking}
{\sc F.~Bach}, {\em Breaking the curse of dimensionality with convex neural networks}, Journal of Machine Learning Research, 18 (2017), pp.~1--53, \url{http://jmlr.org/papers/v18/14-546.html}.

\bibitem{bao2023approximation}
{\sc C.~Bao, Q.~Li, Z.~Shen, C.~Tai, L.~Wu, and X.~Xiang}, {\em Approximation analysis of convolutional neural networks}, East Asian Journal on Applied Mathematics, 13 (2023), pp.~524--549.

\bibitem{barron1993universal}
{\sc A.~R. Barron}, {\em Universal approximation bounds for superpositions of a sigmoidal function}, IEEE Transactions on Information Theory, 39 (1993), pp.~930--945.

\bibitem{bartlett1998almost}
{\sc P.~Bartlett, V.~Maiorov, and R.~Meir}, {\em Almost linear vc dimension bounds for piecewise polynomial networks}, Advances in neural information processing systems, 11 (1998).

\bibitem{bartlett2019nearly}
{\sc P.~L. Bartlett, N.~Harvey, C.~Liaw, and A.~Mehrabian}, {\em Nearly-tight vc-dimension and pseudodimension bounds for piecewise linear neural networks}, The Journal of Machine Learning Research, 20 (2019), pp.~2285--2301.

\bibitem{bengio2017deep}
{\sc Y.~Bengio, I.~Goodfellow, and A.~Courville}, {\em Deep Learning}, MIT Press, Cambridge, MA, 2017.

\bibitem{cai2023achieve}
{\sc Y.~Cai}, {\em Achieve the minimum width of neural networks for universal approximation}, in The Eleventh International Conference on Learning Representations, 2023, \url{https://openreview.net/forum?id=hfUJ4ShyDEU}.

\bibitem{cai2025achieving}
{\sc Y.~Cai and Y.~Duan}, {\em Achieving universal approximation and universal interpolation via nonlinearity of control families}, arXiv,  (2025), \url{https://doi.org/https://arxiv.org/abs/2510.03676}.

\bibitem{cao2021choose}
{\sc S.~Cao}, {\em Choose a transformer: Fourier or galerkin}, Advances in neural information processing systems, 34 (2021), pp.~24924--24940.

\bibitem{chang2018multilevel}
{\sc B.~Chang, L.~Meng, E.~Haber, F.~Tung, and D.~Begert}, {\em Multi-level residual networks from dynamical systems view}, in International Conference on Learning Representations, 2018, \url{https://openreview.net/forum?id=SyJS-OgR-}.

\bibitem{chen2018neuralode}
{\sc R.~T.~Q. Chen, Y.~Rubanova, J.~Bettencourt, and D.~Duvenaud}, {\em Neural ordinary differential equations}, in Advances in Neural Information Processing Systems, vol.~31, 2018.

\bibitem{chen1995universal}
{\sc T.~Chen and H.~Chen}, {\em Universal approximation to nonlinear operators by neural networks with arbitrary activation functions and its application to dynamical systems}, IEEE Transactions on Neural Networks, 6 (1995), pp.~911--917, \url{https://doi.org/10.1109/72.392253}.

\bibitem{cheng2025interpolation}
{\sc J.~Cheng, Q.~Li, T.~Lin, and Z.~Shen}, {\em Interpolation, approximation, and controllability of deep neural networks}, SIAM Journal on Control and Optimization, 63 (2025), pp.~625--649, \url{https://doi.org/10.1137/23M1599744}.

\bibitem{cuchiero2020deep}
{\sc C.~Cuchiero, M.~Larsson, and J.~Teichmann}, {\em Deep neural networks, generic universal interpolation, and controlled odes}, SIAM Journal on Mathematics of Data Science, 2 (2020), pp.~901--919, \url{https://doi.org/10.1137/19M1284117}.

\bibitem{cybenko1989approximation}
{\sc G.~Cybenko}, {\em Approximation by superpositions of a sigmoidal function}, Mathematics of Control, Signals and Systems, 2 (1989), pp.~303--314.

\bibitem{duan2025minimal}
{\sc Y.~Duan and Y.~Cai}, {\em A minimal control family of dynamical systems for universal approximation}, IEEE Transactions on Automatic Control, 70 (2025), pp.~7233--7244, \url{https://doi.org/10.1109/TAC.2025.3569123}.

\bibitem{e2017proposal}
{\sc W.~E}, {\em A proposal on machine learning via dynamical systems}, Communications in Mathematics and Statistics, 5 (2017), pp.~1--11.

\bibitem{e2019barron}
{\sc W.~E, C.~Ma, and L.~Wu}, {\em Barron spaces and the compositional function spaces for neural network models}, arXiv preprint arXiv:1906.08039,  (2019).

\bibitem{e2019priori}
{\sc W.~E, C.~Ma, and L.~Wu}, {\em A priori estimates of the population risk for two-layer neural networks}, Communications in Mathematical Sciences, 17 (2019), pp.~1407--1425.

\bibitem{fang2025two}
{\sc Q.~Fang, L.~Shi, M.~Xu, and D.-X. Zhou}, {\em Two-dimensional deep relu cnn approximation for korobov functions: A constructive approach}, arXiv preprint arXiv:2503.07976,  (2025).

\bibitem{gao2024adaptive}
{\sc Z.~Gao, L.~Yan, and T.~Zhou}, {\em Adaptive operator learning for infinite-dimensional bayesian inverse problems}, SIAM/ASA Journal on Uncertainty Quantification, 12 (2024), pp.~1389--1423.

\bibitem{geng2025mean}
{\sc Z.~Geng, M.~Deng, X.~Bai, J.~Z. Kolter, and K.~He}, {\em Mean flows for one-step generative modeling}, in The Thirty-ninth Annual Conference on Neural Information Processing Systems, 2025, \url{https://openreview.net/forum?id=uWj4s7rMnR}.

\bibitem{guo2022normalizing}
{\sc L.~Guo, H.~Wu, and T.~Zhou}, {\em Normalizing field flows: Solving forward and inverse stochastic differential equations using physics-informed flow models}, Journal of Computational Physics, 461 (2022), p.~111202.

\bibitem{haber2018learning}
{\sc E.~Haber, L.~Ruthotto, E.~Holtham, and S.-H. Jun}, {\em Learning across scales---multiscale methods for convolution neural networks}, in Proceedings of the AAAI conference on artificial intelligence, vol.~32, 2018.

\bibitem{he2022approximation}
{\sc J.~He, L.~Li, and J.~Xu}, {\em Approximation properties of deep relu cnns}, Research in the mathematical sciences, 9 (2022), p.~38.

\bibitem{he2022relu}
{\sc J.~He, L.~Li, and J.~Xu}, {\em Relu deep neural networks from the hierarchical basis perspective}, Computers \& Mathematics with Applications, 120 (2022), pp.~105--114.

\bibitem{he2020relu}
{\sc J.~He, L.~Li, J.~Xu, and C.~Zheng}, {\em Relu deep neural networks and linear finite elements}, Journal of Computational Mathematics, 38 (2020), pp.~502--527.

\bibitem{he2024mgno}
{\sc J.~He, X.~Liu, and J.~Xu}, {\em Mg{NO}: Efficient parameterization of linear operators via multigrid}, in The Twelfth International Conference on Learning Representations, 2024, \url{https://openreview.net/forum?id=8OxL034uEr}.

\bibitem{he2025self}
{\sc J.~He, X.~Liu, and J.~Xu}, {\em Self-composing neural operators with depth and accuracy scaling via adaptive train-and-unroll approach}, arXiv preprint arXiv:2508.20650,  (2025).

\bibitem{he2023expressivity}
{\sc J.~He, T.~Mao, and J.~Xu}, {\em Expressivity and approximation properties of deep neural networks with relu$^k$ activation}, arXiv preprint arXiv:2312.16483,  (2023).

\bibitem{he2023deep}
{\sc J.~He and J.~Xu}, {\em Deep neural networks and finite elements of any order on arbitrary dimensions}, arXiv preprint arXiv:2312.14276,  (2023).

\bibitem{he2016deep}
{\sc K.~He, X.~Zhang, S.~Ren, and J.~Sun}, {\em Deep residual learning for image recognition}, in Proceedings of the IEEE conference on computer vision and pattern recognition, 2016, pp.~770--778.

\bibitem{hornik1989multilayer}
{\sc K.~Hornik, M.~Stinchcombe, and H.~White}, {\em Multilayer feedforward networks are universal approximators}, Neural Networks, 2 (1989), pp.~359--366.

\bibitem{Jia2020}
{\sc F.~Jia, J.~Liu, and X.~C. Tai}, {\em {A regularized convolutional neural network for semantic image segmentation}}, Analysis and Applications,  (2020), pp.~1--19, \url{https://doi.org/10.1142/S0219530519410148}, \url{https://arxiv.org/abs/1907.05287}.

\bibitem{masato2025universal}
{\sc M.~Kimura, K.~Matsui, and Y.~Mizuno}, {\em Universal approximation property of odenet and resnet with a single activation function}, Journal of Computational Mathematics and Data Science, 15 (2025), p.~100116.

\bibitem{klusowski2018approximation}
{\sc J.~M. Klusowski and A.~R. Barron}, {\em Approximation by combinations of relu and squared relu ridge functions with $\ell^1$ and $\ell^0$ controls}, IEEE Transactions on Information Theory, 64 (2018), pp.~7649--7656.

\bibitem{kovachki2021universal}
{\sc N.~Kovachki, S.~Lanthaler, and S.~Mishra}, {\em On universal approximation and error bounds for fourier neural operators}, Journal of Machine Learning Research, 22 (2021), pp.~1--76.

\bibitem{kovachki2023neural}
{\sc N.~Kovachki, Z.~Li, B.~Liu, K.~Azizzadenesheli, K.~Bhattacharya, A.~Stuart, and A.~Anandkumar}, {\em Neural operator: Learning maps between function spaces with applications to pdes}, Journal of Machine Learning Research, 24 (2023), pp.~1--97.

\bibitem{krizhevsky2012imagenet}
{\sc A.~Krizhevsky, I.~Sutskever, and G.~E. Hinton}, {\em Imagenet classification with deep convolutional neural networks}, Advances in neural information processing systems, 25 (2012).

\bibitem{lanthaler2022error}
{\sc S.~Lanthaler, S.~Mishra, and G.~E. Karniadakis}, {\em Error estimates for deeponets: A deep learning framework in infinite dimensions}, Transactions of Mathematics and its Applications, 6 (2022), p.~tnac001.

\bibitem{lecun2015deep}
{\sc Y.~LeCun, Y.~Bengio, and G.~Hinton}, {\em Deep learning}, nature, 521 (2015), pp.~436--444.

\bibitem{lecun1998gradient}
{\sc Y.~Lecun, L.~Bottou, Y.~Bengio, and P.~Haffner}, {\em Gradient-based learning applied to document recognition}, Proceedings of the IEEE, 86 (1998), pp.~2278--2324.

\bibitem{leshno1993multilayer}
{\sc M.~Leshno, V.~Y. Lin, A.~Pinkus, and S.~Schocken}, {\em Multilayer feedforward networks with a nonpolynomial activation function can approximate any function}, Neural networks, 6 (1993), pp.~861--867.

\bibitem{li2024approximation}
{\sc J.~Li, H.~Feng, and D.-X. Zhou}, {\em Approximation analysis of cnns from a feature extraction view}, Analysis and Applications, 22 (2024), pp.~635--654.

\bibitem{li2023minimum}
{\sc L.~Li, Y.~Duan, G.~Ji, and Y.~Cai}, {\em Minimum width of leaky-relu neural networks for uniform universal approximation}, in International Conference on Machine Learning, PMLR, 2023, pp.~19460--19470.

\bibitem{li2023deep}
{\sc Q.~Li, T.~Lin, and Z.~Shen}, {\em Deep learning via dynamical systems: An approximation perspective.}, J. Eur. Math. Soc. 25, 5 (2023), p.~1671–1709, \url{https://doi.org/10.4171/JEMS/1221}.

\bibitem{li2021fourier}
{\sc Z.~Li, N.~B. Kovachki, K.~Azizzadenesheli, B.~liu, K.~Bhattacharya, A.~Stuart, and A.~Anandkumar}, {\em Fourier neural operator for parametric partial differential equations}, in International Conference on Learning Representations, 2021, \url{https://openreview.net/forum?id=c8P9NQVtmnO}.

\bibitem{li2026deep}
{\sc Z.~Li, K.~Liu, Y.~Song, H.~Yue, and E.~Zuazua}, {\em Deep neural ode operator networks for pdes}, Mathematical Models and Methods in Applied Sciences,  (2026).

\bibitem{li2017flow}
{\sc Z.~Li and Z.~Shi}, {\em A flow model of neural networks}, arXiv preprint arXiv:1708.06257,  (2017).

\bibitem{lin2018resnet}
{\sc H.~Lin and S.~Jegelka}, {\em Resnet with one-neuron hidden layers is a universal approximator}, Advances in neural information processing systems, 31 (2018).

\bibitem{lipman2023flow}
{\sc Y.~Lipman, R.~T.~Q. Chen, H.~Ben-Hamu, M.~Nickel, and M.~Le}, {\em Flow matching for generative modeling}, in The Eleventh International Conference on Learning Representations, 2023, \url{https://openreview.net/forum?id=PqvMRDCJT9t}.

\bibitem{liu2024characterizing}
{\sc C.~Liu, E.~Liang, and M.~Chen}, {\em Characterizing resnet's universal approximation capability}, in Forty-first International Conference on Machine Learning, 2024, \url{https://openreview.net/forum?id=z7zHsNFXHc}.

\bibitem{liu2025integral}
{\sc X.~Liu, T.~Mao, and J.~Xu}, {\em Integral representations of sobolev spaces via relu$^k$ activation function and optimal error estimates for linearized networks}, arXiv preprint arXiv:2505.00351,  (2025).

\bibitem{liu2024mitigating}
{\sc X.~Liu, B.~Xu, S.~Cao, and L.~Zhang}, {\em Mitigating spectral bias for the multiscale operator learning}, Journal of Computational Physics, 506 (2024), p.~112944.

\bibitem{lu2021learning}
{\sc L.~Lu, P.~Jin, G.~Pang, Z.~Zhang, and G.~E. Karniadakis}, {\em Learning nonlinear operators via deeponet based on the universal approximation theorem of operators}, Nature machine intelligence, 3 (2021), pp.~218--229.

\bibitem{lu2018beyond}
{\sc Y.~Lu, A.~Zhong, Q.~Li, and B.~Dong}, {\em Beyond finite layer neural networks: Bridging deep architectures and numerical differential equations}, in International conference on machine learning, PMLR, 2018, pp.~3276--3285.

\bibitem{lu2017expressive}
{\sc Z.~Lu, H.~Pu, F.~Wang, Z.~Hu, and L.~Wang}, {\em The expressive power of neural networks: A view from the width}, Advances in neural information processing systems, 30 (2017).

\bibitem{markowich2024pde}
{\sc P.~Markowich and S.~Portaro}, {\em Pde models for deep neural networks: Learning theory, calculus of variations and optimal control}, arXiv preprint arXiv:2411.06290,  (2024), \url{https://arxiv.org/abs/2411.06290}.

\bibitem{mhaskar1993approximation}
{\sc H.~N. Mhaskar}, {\em Approximation properties of a multilayered feedforward artificial neural network}, Advances in Computational Mathematics, 1 (1993), pp.~61--80.

\bibitem{mhaskar1995degree}
{\sc H.~N. Mhaskar and C.~A. Micchelli}, {\em Degree of approximation by neural and translation networks with a single hidden layer}, Advances in applied mathematics, 16 (1995), pp.~151--183.

\bibitem{mhaskar2016deep}
{\sc H.~N. Mhaskar and T.~Poggio}, {\em Deep vs. shallow networks: An approximation theory perspective}, Analysis and Applications, 14 (2016), pp.~829--848.

\bibitem{molinaro2023neural}
{\sc R.~Molinaro, Y.~Yang, B.~Engquist, and S.~Mishra}, {\em Neural inverse operators for solving pde inverse problems}, in Proceedings of the 40th International Conference on Machine Learning, vol.~202 of Proceedings of Machine Learning Research, PMLR, 2023, pp.~25105--25139.

\bibitem{montufar2014number}
{\sc G.~Mont{\'u}far, R.~Pascanu, K.~Cho, and Y.~Bengio}, {\em On the number of linear regions of deep neural networks}, Advances in neural information processing systems, 27 (2014).

\bibitem{oono2019approximation}
{\sc K.~Oono and T.~Suzuki}, {\em Approximation and non-parametric estimation of {ResNet}-type convolutional neural networks}, in International Conference on Machine Learning, PMLR, 2019, pp.~4922--4931.

\bibitem{opschoor2020deep}
{\sc J.~A. Opschoor, P.~C. Petersen, and C.~Schwab}, {\em Deep relu networks and high-order finite element methods}, Analysis and Applications, 18 (2020), pp.~715--770.

\bibitem{petersen2020equivalence}
{\sc P.~Petersen and F.~Voigtlaender}, {\em Equivalence of approximation by convolutional neural networks and fully-connected networks}, Proceedings of the American Mathematical Society, 148 (2020), pp.~1689--1704.

\bibitem{raonic2023convolutional}
{\sc B.~Raonic, R.~Molinaro, T.~Rohner, S.~Mishra, and E.~de~Bezenac}, {\em Convolutional neural operators}, in ICLR 2023 workshop on physics for machine learning, 2023.

\bibitem{ruizbalet2023neuralode}
{\sc D.~Ruiz-Balet and E.~Zuazua}, {\em Neural {ODE} control for classification, approximation and transport}, SIAM Review, 65 (2023), pp.~735--773, \url{https://doi.org/10.1137/21M1411433}, \url{https://doi.org/10.1137/21M1411433}.

\bibitem{shen2019deep}
{\sc Z.~Shen, H.~Yang, and S.~Zhang}, {\em Deep network approximation characterized by number of neurons}, arXiv preprint arXiv:1906.05497,  (2019).

\bibitem{shen2022optimal}
{\sc Z.~Shen, H.~Yang, and S.~Zhang}, {\em Optimal approximation rate of {ReLU} networks in terms of width and depth}, Journal de Math{\'e}matiques Pures et Appliqu{\'e}es, 157 (2022), pp.~101--135.

\bibitem{siegel2023optimal}
{\sc J.~W. Siegel}, {\em Optimal approximation rates for deep relu neural networks on sobolev and besov spaces}, Journal of Machine Learning Research, 24 (2023), pp.~1--52.

\bibitem{siegel2024sharp}
{\sc J.~W. Siegel and J.~Xu}, {\em Sharp bounds on the approximation rates, metric entropy, and n-widths of shallow neural networks}, Foundations of Computational Mathematics, 24 (2024), pp.~481--537.

\bibitem{tabuada2024universal}
{\sc P.~Tabuada and B.~Gharesifard}, {\em Universal approximation power of deep neural networks via nonlinear control theory}, arXiv preprint arXiv:2007.06007,  (2024), \url{https://arxiv.org/abs/2007.06007}.

\bibitem{tai2024pottsmgnet}
{\sc X.-C. Tai, H.~Liu, and R.~Chan}, {\em Pottsmgnet: A mathematical explanation of encoder-decoder based neural networks}, SIAM Journal on Imaging Sciences, 17 (2024), pp.~540--594.

\bibitem{tai2025mathematical}
{\sc X.-C. TAI, H.~LIU, R.~H. CHAN, and L.~LI}, {\em A mathematical explanation of unet}, Mathematical Foundations of Computing, 8 (2025), pp.~874--889.

\bibitem{tai2025mathematical1}
{\sc X.-C. Tai, H.~Liu, L.~Li, and R.~H. Chan}, {\em A mathematical explanation of transformers for large language models and gpts}, arXiv preprint arXiv:2510.03989,  (2025).

\bibitem{telgarsky2016benefits}
{\sc M.~Telgarsky}, {\em Benefits of depth in neural networks}, in Proceedings of the 29th Annual Conference on Learning Theory, PMLR, 2016, pp.~1517--1539.

\bibitem{wu2024transolver}
{\sc H.~Wu, H.~Luo, H.~Wang, J.~Wang, and M.~Long}, {\em Transolver: A fast transformer solver for pdes on general geometries}, in International Conference on Machine Learning, PMLR, 2024, pp.~53681--53705.

\bibitem{yang2025deep}
{\sc Y.~Yang and J.~He}, {\em Deep neural networks with general activations: Super-convergence in sobolev norms}, arXiv preprint arXiv:2508.05141,  (2025).

\bibitem{yang2023nearly}
{\sc Y.~Yang, H.~Yang, and Y.~Xiang}, {\em Nearly optimal vc-dimension and pseudo-dimension bounds for deep neural network derivatives}, Advances in Neural Information Processing Systems, 36 (2023), pp.~21721--21756.

\bibitem{yarotsky2017error}
{\sc D.~Yarotsky}, {\em Error bounds for approximations with deep relu networks}, Neural Networks, 94 (2017), pp.~103--114.

\bibitem{zhou2020universality}
{\sc D.-X. Zhou}, {\em Universality of deep convolutional neural networks}, Applied and Computational Harmonic Analysis, 48 (2020), pp.~787--794.

\end{thebibliography}

\appendix
 \section{Proof of Lemma~\ref{thm:UAP_general_dim}}\label{sec:appendix}
\begin{proof}
The universal approximation property for the separation scheme follows directly from known results, while the composition structured flow requires a more elaborate construction. Let $d=\max(2d_x+1,d_y)$.

\noindent\textbf{1. Separation structured flow.}  
\cite{duan2025minimal} shows that flows of piecewise‑constant vector fields taking values in
\(\mathcal{F}(\sigma)=\{z\mapsto Az+b\}\cup\{\sigma\}\) are universal approximators, with suitable linear maps \(P,R\), provided the dimension $D\ge d$.  
By choosing the coefficient \(W_t,b_t,\alpha_t\) alternate between zero and non-zero, the separation‑structured flow
\( W_t z + b_t + \alpha_t\sigma_a(z)\) can include the above class of vector fields.  
Hence, it inherits the universal approximation property.

\noindent\textbf{2. Composition structure.}  
\cite{li2023deep,duan2025minimal} prove that the flow of
\begin{equation}\label{eq:known-UAP}
\frac{dz}{dt} = D_t\sigma_a(A_t z + b_t),
\end{equation}
with piecewise‑constant diagonal \(D_t\) (entries \(\pm1\)), is universal approximator when $D\ge d$.  
We construct a double‑width system on \(\mathbb R^{2d}\) that simulates \eqref{eq:known-UAP}.  
Write \(D_t=\operatorname{diag}(D^1_t,\dots,D^d_t)\), \(A_t=[\alpha^1_t;\dots;\alpha^d_t]\), \(b_t=[b^1_t;\dots;b^d_t]\).  
For each \(i\), define
\[
\mathcal{M}(D^i_t,\alpha^i_t)=\begin{cases}
\bigl[\begin{smallmatrix}\alpha^i_t&-\alpha^i_t\\0&0\end{smallmatrix}\bigr],&D^i_t=1,\\
\bigl[\begin{smallmatrix}0&0\\\alpha^i_t&-\alpha^i_t\end{smallmatrix}\bigr],&D^i_t=-1,
\end{cases}
\qquad
\mathcal{N}(D^i_t,b^i_t)=\begin{cases}
\bigl[\begin{smallmatrix}b^i_t\\0\end{smallmatrix}\bigr],&D^i_t=1,\\
\bigl[\begin{smallmatrix}0\\b^i_t\end{smallmatrix}\bigr],&D^i_t=-1.
\end{cases}
\]
Set \(\tilde A_t=[\mathcal{M}(D^1_t,\alpha^1_t);\dots;\mathcal{M}(D^d_t,\alpha^d_t)]\), \(\tilde b_t=[\mathcal{N}(D^1_t,b^1_t);\dots;\mathcal{N}(D^d_t,b^d_t)]\) and consider
\begin{equation}\label{eq:double-width}
\begin{cases}
  \frac{d\hat z}{dt} = \sigma_a(\tilde A_t\hat z + \tilde b_t),\\[4pt]
\hat z(0)=\frac{1}{1+a}\bigl(\sigma_a(z(0)),\sigma_a(-z(0))\bigr)^\top .
\end{cases}
\end{equation}
A direct calculation (using \(\sigma_a(z)-\sigma_a(-z)=(1+a)z\)) shows that \(\tilde z(t):=(I_d,-I_d)\hat z(t)\) satisfies \eqref{eq:known-UAP} with \(\tilde z(0)=z(0)\).  
Therefore, the flow $G_{\theta_t}^T$ of \eqref{eq:double-width} reproduces the universal approximator \eqref{eq:known-UAP}.
That is, there exist two linear maps $P_1:\mathbb R^{d_x}\to\mathbb R^{d},R_1:\mathbb R^{d}\to\mathbb R^{d_y}$ such that 
\begin{equation*}
\|R_1\circ S^\top\circ G_{\theta_t}^T\circ\frac{\sigma_a}{1+a}\circ S\circ P_1-F\|_{L^\infty(\Omega)}\le \varepsilon.
\end{equation*}
where $S=(I_d,-I_d)^\top$. Since the leaky-ReLU $\sigma_a$ can actually be represented as the flow $H^\tau$ of a system with $\sigma_a$ at the right-hand side:

\begin{equation*}
    \frac{\mathrm d z(t)}{\mathrm d t} =  \sigma_a\left(z(t)\right), \quad 0 < t \le \tau,
\end{equation*}
where $\tau=\frac{ln(a)}{a-1}$. 
Let $R=R_1\circ S^\top$, $P=\frac{e^{-\tau}}{1+a} S\circ P_1$, $T'=T+\tau$ and $F^{T'}_{\theta'_t}=G_{\theta_t}^T\circ H^\tau$, then
\begin{equation*}
\|R\circ F^{T'}_{\theta'_t}\circ P-F\|_{L^\infty(\Omega)}\le \varepsilon.
\end{equation*}
This completes the proof for both architectures.
\end{proof}

\end{document}